\def\eqref#1{equation~\ref{#1}}
\def\1{\bm{1}}
\DeclareMathAlphabet{\mathsfit}{\encodingdefault}{\sfdefault}{m}{sl}
\SetMathAlphabet{\mathsfit}{bold}{\encodingdefault}{\sfdefault}{bx}{n}
\DeclareMathOperator*{\argmax}{arg\,max}
\title{The Modality Focusing Hypothesis: Towards Understanding Crossmodal Knowledge Distillation}
\author{Zihui Xue\thanks{Zihui, Zhengqi, and Sucheng contribute equally. Work is done during internship at Shanghai Qi Zhi Institute.} \ $^{,1}$, \  Zhengqi Gao$^{*,2}$, Sucheng Ren$^{*,3}$, Hang Zhao\thanks{Correspond to \href{mailto:hangzhao@mail.tsinghua.edu.cn}{hangzhao@mail.tsinghua.edu.cn}.} \ $^{,4}$\\
$^{1}$ The University of Texas at Austin \quad
$^{2}$ Massachusetts Institute of Technology \\
$^{3}$ South China University of Technology \quad
$^{4}$ Tsinghua University, Shanghai Qi Zhi Institute
}
\def\ie{\textit{i.e.}}
\def\eg{\textit{e.g.}}
\newcommand{\uu}{{\color{green}{$\uparrow$}}}
\newcommand{\dd}{{\color{red}{$\downarrow$}}}
\newcommand{\revision}[1]{\textcolor{black}{#1}}
\newtheorem{theorem}{Theorem}
\newtheorem{corollary}{Corollary}
\newtheorem{lemma}{Lemma}
\begin{document}

\maketitle

\begin{abstract}
Crossmodal knowledge distillation (KD) extends traditional knowledge distillation to the area of multimodal learning and demonstrates great success in various applications. To achieve knowledge transfer across modalities, a pretrained network from one modality is adopted as the teacher to provide supervision signals to a student network learning from another modality. In contrast to the empirical success reported in prior works, the working mechanism of crossmodal KD remains a mystery. In this paper, we present a thorough understanding of crossmodal KD. We begin with two case studies and demonstrate that KD is not a universal cure in crossmodal knowledge transfer. We then present the modality Venn diagram (MVD) to understand modality relationships and the modality focusing hypothesis (MFH) revealing the decisive factor in the efficacy of crossmodal KD. Experimental results on 6 multimodal datasets help justify our hypothesis, diagnose failure cases, and point directions to improve crossmodal knowledge transfer in the future.\footnote{Our code is available at \url{https://github.com/zihuixue/MFH}.}
\end{abstract}

% Multimodal knowledge distillation (KD) extends traditional knowledge distillation to the area of multimodal learning. One common practice is to adopt a well-performed multimodal network as the teacher in the hope that it can transfer its full knowledge to a unimodal student for performance improvement. In this paper, we investigate the efficacy of multimodal KD. We begin by providing two failure cases of it and demonstrate that KD is not a universal cure in multimodal knowledge transfer. We present the modality Venn diagram to understand modality relationships and the modality focusing hypothesis revealing the decisive factor in the efficacy of multimodal KD. Experimental results on 6 multimodal datasets help justify our hypothesis, diagnose failure cases, and point directions to improve distillation performance.

\section{Introduction}
Knowledge distillation (KD) is an effective technique to transfer knowledge from one neural network to another~\citep{kdsurvey1,kdsurvey2}. Its core mechanism is a teacher-student learning framework, where the student network is trained to mimic the teacher through a loss. The loss function, initially proposed by~\citep{hinton2015distilling} as the KL divergence between teacher and student soft labels, has been extended in many ways \citep{zagoruyko2016paying,tung2019similarity,park2019relational,peng2019correlation,tian2019contrastive}. KD has been successfully applied to various fields and demonstrates its high practical value. 

The wide applicability of KD stems from its generality: \emph{any} student can learn from \emph{any} teacher. To be more precise, the student and teacher network may differ in several ways. Three common scenarios are: (1) \emph{model capacity difference}: Many works~\citep{zagoruyko2016paying,tung2019similarity,park2019relational,peng2019correlation} on model compression aim to learn a lightweight student matching the performance of its cumbersome teacher for deployment benefits. (2) \emph{architecture (inductive bias) difference}: As an example, recent works~\citep{deit,ren2021co,xianing2022dearkd} propose to utilize a CNN teacher to distill its inductive bias to a transformer student for data efficiency. (3) \emph{modality difference}: KD has been extended to transfer knowledge across modalities~\citep{gupta2016cross,aytar2016soundnet,zhao2018through,garcia2018action,thoker2019cross,lipreading,afouras2020asr,valverde2021there,xue2021multimodal}, where the teacher and student network come from different modalities. Examples include using an RGB teacher to provide supervision signals to a student network taking depth images as input, and adopting an audio teacher to learn a visual student, etc.
% (\eg, an audio teacher and a visual student).

% The student and teacher often belong to the same model family but differ in model size (\eg, a ResNet-50 \citep{} teacher and a ResNet-18 student).
% This puts the efficacy of KD into question: \emph{Is KD always efficient, and if not, what is a good indicator of KD performance?}

Despite the great empirical success reported in prior works, the working mechanism of KD is still poorly understood~\citep{kdsurvey2}. This puts the efficacy of KD into question: Is KD always efficient? If not, what is a good indicator of KD performance? A few works~\citep{cho2019efficacy,tang2020understanding,ren2021co} are in search for the answer in the context of \emph{model capacity difference} and \emph{architecture difference}. However, the analysis for the third scenario, KD under modality difference or formally \emph{crossmodal KD}, remains an open problem. This work aims to fill this gap and for the first time provides a comprehensive analysis of crossmodal KD. Our major contributions are the following:

% Meanwhile, we witness an increasing number of applications for the third scenario, KD under modality difference or formally \emph{multimodal KD}, due to the great availability of multimodal sensors. 
% especially considering that working and not working cases have been both reported in the literature.

% \begin{itemize}
% \setlength{\itemsep}{1pt}
% \setlength{\parsep}{1pt}
% \setlength{\parskip}{1pt}
%     \item Starting with two failure cases of multimodal KD (i.e., more accurate multimodal teacher doesn't yield a better student), we propose the modality Venn diagram to understand the relationship between multimodal data.
%     \item We propose the \textbf{\textit{modality focusing hypothesis}} explaining the main factor determining the efficacy of multimodal KD based on the definitions of modality-general, modality-specific decisive features in the modality Venn diagram. We verify our hypothesis on synthetic Gaussian data with manually constructed decisive features. 
%     \item  To further justify the proposed hypothesis  on real datasets where decisive features are unknown, we propose a method to identify them based on permutation. We summarize several derived implications of the proposed hypothesis at the end, and  thoroughly verify them on several datasets (i.e., RAVDESS, NYU depth V2, VGG Sound). 
% \end{itemize}

\begin{itemize}
\setlength{\itemsep}{1pt}
\setlength{\parsep}{1pt}
\setlength{\parskip}{1pt}
    \item We evaluate crossmodal KD on a few multimodal tasks and find surprisingly that teacher performance does not always positively correlate with student performance. 
    \item To explore the cause of performance mismatch in crossmodal KD, \revision{we adopt the modality Venn diagram (MVD) to understand modality relationships and formally define \emph{modality-general decisive features} and \emph{modality-specific decisive features}}.
    \item We present the modality focusing hypothesis (MFH) that provides an explanation of when crossmodal KD is effective. We hypothesize that \emph{modality-general decisive features} are the crucial factor that determines the efficacy of crossmodal KD. 
    % We further propose a permutation-based approach to identify modality-general decisive features for real multimodal data.
    \item We conduct experiments on 6 multimodal datasets (\ie, synthetic Gaussian, AV-MNIST, RAVDESS, VGGSound, NYU Depth V2, and MM-IMDB). The results validate the proposed MFH and provide insights on how to improve crossmodal KD.
    % demonstrate that our proposed hypothesis can guide multimodal KD and bring performance improvement. 
\end{itemize}

\section{Related Work}
\subsection{Unimodal KD}

% Model compression~\citep{tung2019similarity,peng2019correlation,he2019knowledge,liu2019structured} stands for one big application field of KD, where the goal is to obtain a small model (\ie, the student) with high inference efficiency through learning from a pretrained large model (\ie, the teacher). More recently, KD has been introduced to facilitate the training of a vision transformer~\citep{deit,ren2021co,xianing2022dearkd}. A transformer student achieves high data efficiency with inductive biases distilled from CNN teachers. 
% such as image classification~\citep{tung2019similarity,park2019relational,peng2019correlation}, semantic segmentation~\citep{he2019knowledge,liu2019structured} and video understanding~\citep{mullapudi2019online,pan2020spatio}.

KD represents a general technique that transfers information learned by a teacher network to a student network, with applications to many vision tasks~\citep{tung2019similarity,peng2019correlation,he2019knowledge,liu2019structured}.
Despite the development towards better distillation techniques or new application fields, there is limited literature~\citep{phuong2019towards,cho2019efficacy,tang2020understanding,ren2021co,ren2023tinymim} on understanding the working mechanism of KD. Specifically, \cite{cho2019efficacy} and \cite{mirzadeh2020improved} investigate KD for model compression, \ie, when the student and teacher differ in model size. They point out that mismatched capacity between student and teacher network can lead to failure of KD. ~\cite{ren2021co} analyze KD for vision transformers and demonstrate that teacher's inductive bias matters more than its accuracy in improving performance of the transformer student. These works provide good insight into understanding KD, yet their discussions are limited to unimodality and have not touched on KD for multimodal learning.

% p1: First proposed by Hinton et. al; intro to KD; applications: (1) model compression; (2) knowledge transfer
% p2: Many works have been proposed for model compression. xxx
% p3: Recent works use KD to transfer CNN inductive bias to transformer. 

% \vspace{-2mm}
\subsection{Crossmodal KD}

With the accessibility of the Internet and the growing availability of multimodal sensors, multimodal learning has received increasing research attention~\citep{mmsurvey}. Following this trend, KD has also been extended to achieve knowledge transfer from multimodal data and enjoys diverse applications, such as action recognition~\citep{garcia2018action,luo2018graph,thoker2019cross}, lip reading~\citep{lipreading,afouras2020asr}, and medical image segmentation~\citep{medical1,medical2}. 
% Various works~\citep{aytar2016soundnet,lipreading,afouras2020asr} leverage audio-visual correspondence and transfer representations between the two modalities with KD.
Vision models are often adopted as teachers to provide supervision to student models of other modalities, \eg, sound~\citep{aytar2016soundnet,xue2021multimodal}, depth~\citep{gupta2016cross,xue2021multimodal}, optical flow~\citep{garcia2018action}, thermal~\citep{kruthiventi2017low}, and wireless signals~\citep{zhao2018through}. Although these works demonstrate potentials of crossmodal KD, they are often associated with a specific multimodal task. An in-depth analysis of crossmodal KD is notably lacking, which is the main focus of this paper.
% \vspace{-2mm}
\revision{\subsection{Multimodal Data Relations}
There is continuous discussion on how to characterize multimodal (or multi-view) data relations. Many works~\citep{tsai2020self,lin2021completer,lin2022dual} utilize the multi-view assumption~\citep{sridharan2008information}, which states that either view alone is sufficient for the downstream tasks. However, as suggested in ~\citep{tsai2020self}, when the two views of input lie in different modalities, the multi-view assumption is likely to fail.\footnote{A detailed comparison of our proposed MVD with the multi-view assumption is presented in Appendix \ref{ap_sec.multiview}.} In the meantime, a few works on multimodal learning~\citep{wang2016learning,zhang2018multimodal,hazarika2020misa,ma2020learning} indicate that multimodal features can be decomposed as modality-general features and specific features in each modality. Building upon these ideas, in this work, we present the MVD to formally characterize modality relations.}%Specifically, the multi-view assumption can be seen as a special case of MVD.}

\revision{
In addition, the importance of modality-general information has been identified in these works, yet with different contexts. In multi-view learning, \citep{lin2021completer,lin2022dual} consider shared information between two views as the key to enforce cross-view consistency. To boost multimodal network performance and enhance its generalization ability, \citep{wang2016learning,zhang2018multimodal,hazarika2020misa,ma2020learning} propose different ways to separate modality-general and modality-specific information. For semi-supervised multimodal learning, ~\citep{sun2020tcgm} aims at maximizing the mutual information shared by all modalities. To the best of our knowledge, our work is the first to reveal the importance of modality-general information in crossmodal KD. 
}

\section{On the Efficacy of Crossmodal KD}\label{sec.efficacy}
First, we revisit the basics of KD and introduce notations used throughout the paper. Consider a supervised $K$-class classification problem. Let $\mathbf{f}_{\boldsymbol{\theta}_s}(\mathbf{x})\in\mathbb{R}^K$ and $\mathbf{f}_{\boldsymbol{\theta}_t}(\mathbf{x})\in\mathbb{R}^K$ represent the output (\ie, class probabilities) of the student and teacher networks respectively, where  $\{\boldsymbol{\theta}_s,\boldsymbol{\theta}_t\}$ are learnable parameters. Without loss of generality, we limit our discussion within input data of two modalities, denoted by $\mathbf{x}^a$ and $\mathbf{x}^b$ for modality $a$ and $b$, respectively. Assume that we aim to learn a student network that takes $\mathbf{x}^b$ as input. In conventional \emph{unimodal KD}, the teacher network takes input from the same modality as the student network (\ie, $\mathbf{x}^b$). The objective for training the student is:
\begin{align}\label{eq.general_kd_loss}
        \mathcal L = \rho \mathcal{L}_{task} & + (1-\rho) \mathcal{L}_{kd}
        % \mathcal{L}_{task} = \text{Cross\_Entropy}(y, \mathbf{f}_{\boldsymbol{\theta}_s}(\mathbf{x}^b)) & \quad\text{and}\quad    \mathcal{L}_{kd} =  \text{KL\_Divergence}(\mathbf{f}_{\boldsymbol{\theta}_t}(\mathbf{x}^b),\mathbf{f}_{\boldsymbol{\theta}_s}(\mathbf{x}^b))
\end{align}    
where $\mathcal{L}_{task}$ represents the cross entropy loss between the ground truth label $y\in\{0,1,\cdots,K-1\}$ and the student prediction $\mathbf{f}_{\boldsymbol{\theta}_s}(\mathbf{x}_b)$, $\mathcal{L}_{kd}$ represents the KL divergence between the student prediction $\mathbf{f}_{\boldsymbol{\theta}_s}(\mathbf{x}^b)$ and the teacher prediction $\mathbf{f}_{\boldsymbol{\theta}_t}(\mathbf{x}^b)$, and $\rho\in[0,1]$ weighs the importance of two terms $\mathcal{L}_{task}$ and $\mathcal{L}_{kd}$ (\ie, driving the student to true labels or teacher's soft predictions).

\emph{Crossmodal KD} resorts to a teacher from the other modality (\ie, $\mathbf{x}^a$) to transfer knowledge to the student. Eq. (\ref{eq.general_kd_loss}) is still valid with a slight correction that the KL divergence term is now calculated using $\mathbf{f}_{\boldsymbol{\theta}_s}(\mathbf{x}^b)$ and $\mathbf{f}_{\boldsymbol{\theta}_t}(\mathbf{x}^a)$. In addition, there is one variant (or special case) of crossmodal KD, where a multimodal teacher taking input from both modality $a$ and $b$ is adopted for distillation, and $\mathcal{L}_{kd}$ is now a KL divergence term between $\mathbf{f}_{\boldsymbol{\theta}_s}(\mathbf{x}^b)$ and $\mathbf{f}_{\boldsymbol{\theta}_t}(\mathbf{x}^a, \mathbf{x}^b)$.

% {This could be understood as distilling a teacher network of a new modality $a^\prime=(a,b)$ to a student network of modality $b$ in crossmodal KD.}. Or something like: Note that this special case could be understood as distilling a teacher network of a new modality $a^\prime$ composed of modality $a$ and $b$ to a student of modality $b$ in our definition of crossmodal KD.

% \begin{equation}
%     \mathcal{L}_{kd} =  \text{KL}(\mathbf{f}_{\boldsymbol{\theta}_T}(\mathbf{x}_2),\mathbf{f}_{\boldsymbol{\theta}_S}(\mathbf{x}_1))
% \end{equation}

% Note that a few works~\citep{luo2018graph,medical1,valverde2021there} utilize a multimodal teacher for distillation. Compared with a unimodal teacher from modality $2$, the multimodal teacher takes input from both modality $1$ and modality $2$ (\ie, $\mathbf{x}_T\leftarrow (\mathbf{x}_1,\mathbf{x}_2), \mathbf{x}_S\leftarrow \mathbf{x}_1$). Crossmodal knowledge transfer is still an essential component, hence we consider such case as a special form of crossmodal distillation, where $\mathcal{L}_{kd} = \text{KL}(\mathbf{f}_{\boldsymbol{\theta}_T}(\mathbf{x}_1, \mathbf{x}_2),\mathbf{f}_{\boldsymbol{\theta}_S}(\mathbf{x}_1))$.  

We first present a case study on the comparison of crossmodal KD with unimodal KD. Consider the special case of crossmodal KD where a multimodal teacher is adopted. Intuitively, adopting a multimodal teacher, which takes both modality $a$ and $b$ as input, can be beneficial for distillation since: (1) a multimodal network usually enjoys a higher accuracy than its unimodal counterpart~\citep{mmsurvey}, and a more accurate teacher ought to result in a better student; (2) the complementary modality-dependent information brought by a multimodal teacher can enrich the student with additional knowledge. This idea motivates many research works \citep{luo2018graph,medical1,valverde2021there} to replace 
a unimodal teacher with a multimodal one, in an attempt to improve student performance. Despite many empirical evidence reported in prior works, in this paper, we reflect on this assumption and ask the question: \emph{Is crossmodal KD always effective?}

% Their underlying motivation is intuitive: {a more accurate teacher can lead to a better student. Also, the complementary modality-dependent information brought by the teacher network may further enhance student performance.} However, in this paper, we reflect on this assumption and ask the question: \emph{Is multimodal KD always more effective than unimodal KD?}

% a more accurate teacher network leads to a better student. Moreover, one might also expect that the multimodal teacher provides complementary modality-dependent information for the student to learn; this can further enhance student performance.

% With all these evidence, it seems natural to conclude that multimodal KD is superior to unimodal KD. However, does the conclusion always hold true? In spite of the great empirical success reported in prior works \citep{}, we ask one fundamental question in this paper: \emph{Is multimodal KD always effective?} Below we provide a comparison of multimodal KD with unimodal KD and describe two failure cases of multimodal KD. 

\begin{table}[!h]
\begin{center}
\caption{Evaluation of unimodal KD (UM-KD) and crossmodal KD (CM-KD) on AV-MNIST and NYU Depth V2. `Mod.' is short for modality, `mIoU' denotes mean Intersection over Union, and \texttt{A}, \texttt{I}, \texttt{RGB}, \texttt{D} represents audio, grayscale images, RGB images and depth images, respectively.}
\label{table:headings}
\scalebox{0.8}{
\begin{tabular}{cclclclcl}
\toprule
 & \multicolumn{4}{c}{AV-MNIST} & \multicolumn{4}{c}{NYU Depth V2} \\
 \cmidrule(lr){2-5}
\cmidrule(lr){6-9}
 & \multicolumn{2}{c}{Teacher} & \multicolumn{2}{c}{Student} & \multicolumn{2}{c}{Teacher} & \multicolumn{2}{c}{Student} \\
 & Mod. & Acc. (\%) & Mod. & Acc. (\%) & Mod. & mIoU (\%) & Mod. & mIoU (\%)\\
\midrule
 No-KD & - & - & \texttt{A} &68.36 & - & - & \texttt{RGB} & 46.36 \\
 UM-KD & \texttt{A} & 84.57 & \texttt{A} & 70.10 & \texttt{RGB} & 46.36 & \texttt{RGB} &48.00 \\
 CM-KD & \texttt{I} + \texttt{A} & 91.61 (\uu) & \texttt{A} & 69.73 (\dd) & \texttt{RGB} + \texttt{D} & 51.00 (\uu) & \texttt{RGB} & 47.78 (\dd) \\
\bottomrule
\end{tabular}
}
\label{tab.avmnist}
\end{center}
\end{table}

Table \ref{tab.avmnist} provides two counterexamples for the above question on AV-MNIST and NYU Depth V2 data. The goal is to improve an audio student using KD on AV-MNIST and to improve an RGB model on NYU Depth V2.
%AV-MNIST \citep{avmnist} is an audio-visual dataset containing hand-written digits (\ie, grayscale images) and human speaking digits (\ie, audio). Here we attempt to improve an audio student using KD. Similarly, the goal is to improve an RGB model with KD. 
From Table \ref{tab.avmnist}, we can see that a more accurate multimodal network does not serve as a better teacher in these two cases. For AV-MNIST, while the audio-visual teacher itself has a much higher accuracy than the unimodal teacher (\ie, +7.04\%), the resulting student is worse (\ie, -0.37\%) instead. Similarly, the great increase in teacher performance (\ie, +4.64\%) does not translate to student improvement (\ie, -0.22\%) for NYU Depth V2. These results cast doubt on the efficacy of crossmodal KD.\footnote{Note that we even give preferable treatment to crossmodal KD: we take a multimodal network as teacher, and this teacher achieves higher accuracy than a teacher typically used in crossmodal KD.} Even with the great increase in teacher accuracy, crossmodal KD fails to outperform unimodal KD in some cases. Contradictory to the previous intuition, teacher performance seems not reflective of student performance. Inspired by this observation, our work targets on exploring the open problem: \emph{What is the fundamental factor deciding the efficacy of crossmodal KD?}

% Note that we already consider the special case of crossmodal KD, where a multimodal teacher is adopted rather than the crossmodal teacher, and a multimodal teacher generally has better performance.  

\newcommand{\df}{decisive features}
\newcommand{\gd}{general decisive features}
\newcommand{\sd}{specific decisive features}
\section{Proposed Approach}\label{sec.hypothesis}

\subsection{The Modality Venn Diagram}\label{sec.venn}

To study crossmodal KD, it is critical to first establish an understanding of multimodal data. Before touching multimodal data, let us fall back and consider unimodal data. Following a causal perspective \citep{scholkopf2012causal} (\ie, features cause labels), we assume that the label $y$ is determined by a subset of features in $\mathbf{x}^a$ (or $\mathbf{x}^b$); this subset of features are referred to as \emph{decisive features} for modality $a$ (or modality $b$) throughout the paper. For instance, colors of an image help identify some classes (\eg, distinguish between a zebra and a horse) and can be considered as \emph{decisive features}. 

When considering multimodal data, input features of the two modalities will have logical relations such as intersection and union. \revision{We describe the modality Venn diagram (MVD) below to characterize this relationship. Stemming from the common perception that multimodal data possess shared information and preserve information specific to each modality, MVD states that any multimodal features are composed of \emph{modality-general features} and \emph{modality-specific features}.} \emph{Decisive features} of the two modalities are thus composed of two parts: (1) \emph{modality-general decisive features} and (2) \emph{modality-specific decisive features}; these two parts of \emph{decisive features} work together and contribute to the final label $y$. Fig. \ref{fig.mod} left shows an example of a video-audio data pair, where the camera only captures one person due to its position angle and the audio is mixed sounds of two instruments. Fig. \ref{fig.mod} right illustrates how we interpret these three features (\ie, \emph{modality-general decisive}, visual \emph{modality-specific decisive} and audio \emph{modality-specific decisive}) at the input level.

% The key idea of MVD is that any multimodal features are composed of \emph{modality-general features} and \emph{modality-specific features}, which reflects the common perception that multimodal data possess shared information and preserve information specific to each modality as well \citep{wang2016learning,zhang2018multimodal,hazarika2020misa}. 
% Fig. \ref{fig.mod} illustrates how we interpret these three sources of information at the input level. Note that MVD is a general description of multimodal data and can represent the relationship given any pair of multimodal features. In Fig. \ref{fig.mod} right, we correlate MVD with the feature extraction process of two unimodal neural networks. Input-level features contain much label-irrelevant noise, and more \emph{decisive features} are extracted as networks go deep. Ideally, output-level features are all decisive and can be used to derive correct predictions. 

 \begin{figure}[!h]
    \centering
    \includegraphics[width=0.8\linewidth]{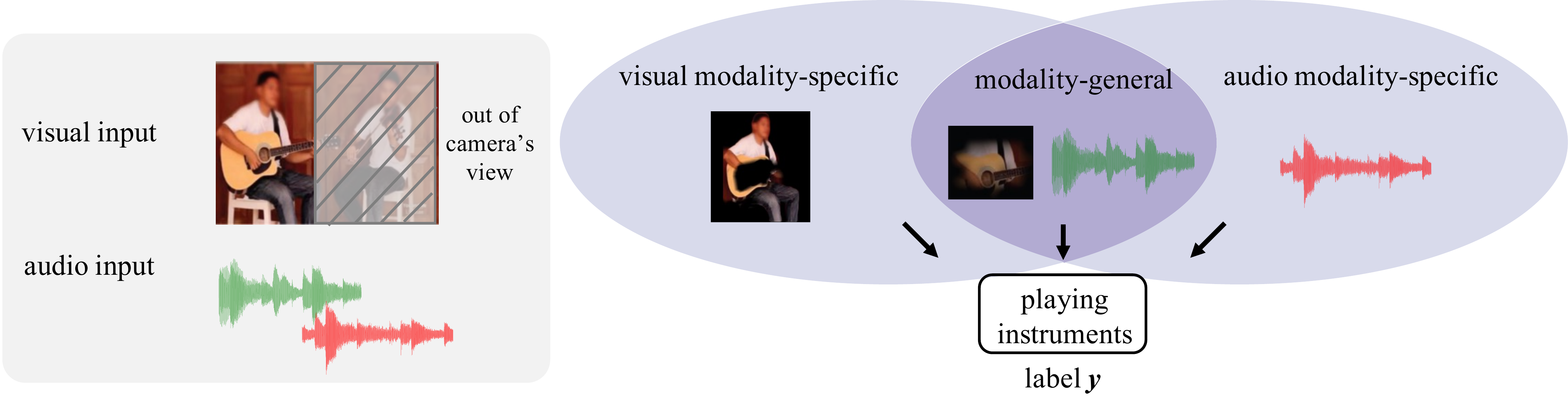}
    \caption{An input video-audio pair can be regarded as composed of \emph{modality-general features} and \emph{modality-specific features} in the visual and audio modality. For instance, the man playing violin on the right is not captured by the camera and hence its sound (marked in red) belongs to audio modality-specific information.}
    \label{fig.mod}
\end{figure}
% Right: the modality Venn diagram. For any given pair of multimodal features (input-level, middle-level, output-level), there exist \gd \ (colored in dark purple) and \sd \ (colored in light purple).
\begin{figure}[!b]
    \centering
    \includegraphics[width=0.9\linewidth]{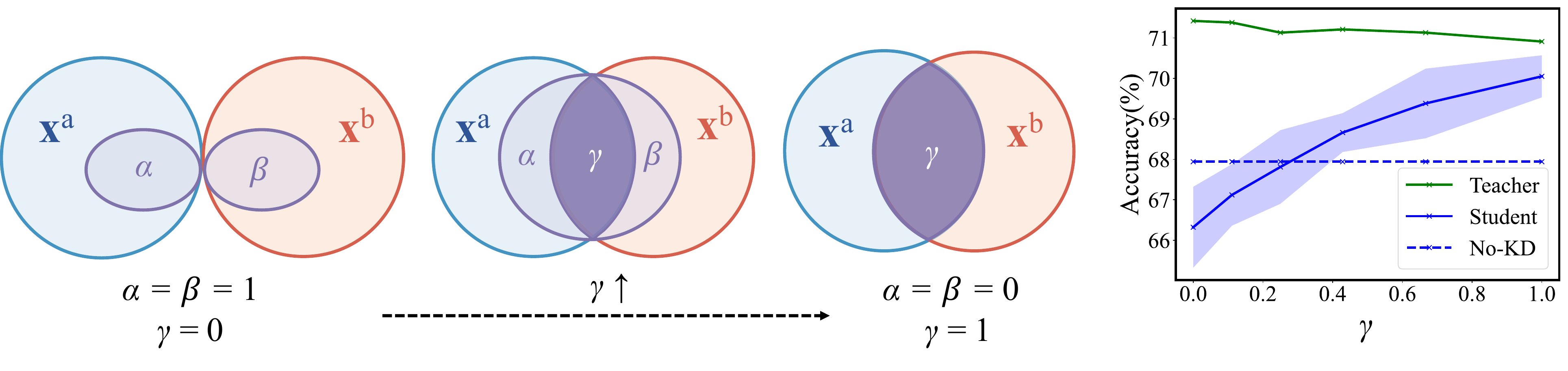}
    \caption{An illustration of MFH with synthetic Gaussian data. Teacher modality is $\mathbf{x}^a$ and student modality is $\mathbf{x}^b$. We plot the confidence interval of one standard deviation for student accuracy. With increasing $\gamma$, crossmodal KD becomes more effective.}
    \label{fig.gauss1}
\end{figure}

\begin{figure}[!t]
    \centering
    \includegraphics[width=0.9\linewidth]{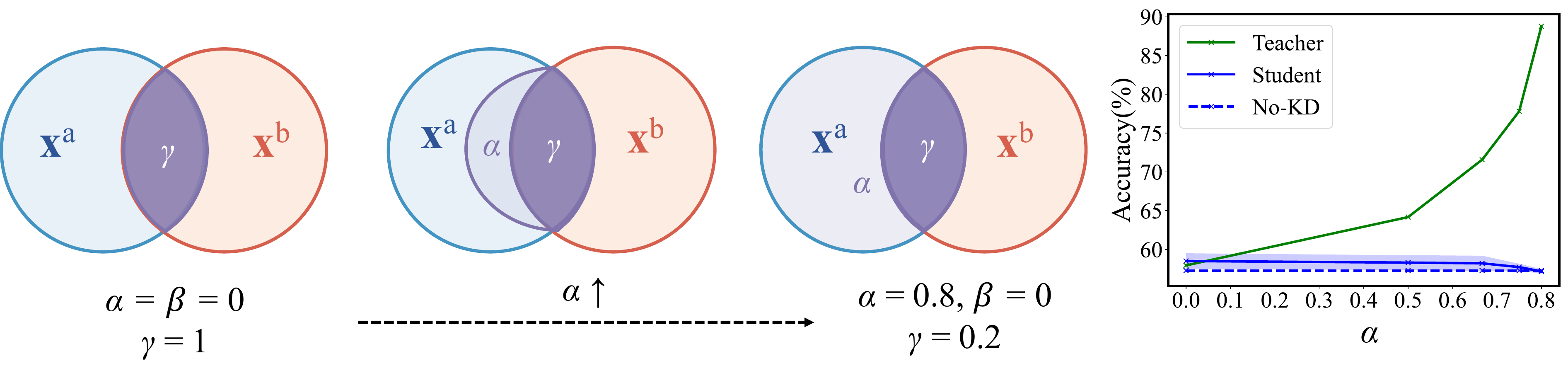}
    \caption{With increasing $\alpha$ (\ie, decreasing $\gamma$), the teacher improves its prediction accuracy but the student network fails to benefit from KD. See the caption of Fig.~\ref{fig.gauss1} for more explanations.}
    \label{fig.gauss2}
\end{figure}

Next, we propose a formal description of MVD to capture the generating dynamics of multimodal data. Let $\mathcal{X}^a$, $\mathcal{X}^b$, and $\mathcal{Y}$ be the feature space of modality $a$, modality $b$, and the label space, respectively, and $(\mathbf{x}^a,\,\mathbf{x}^b,\,y)$ be a pair of data drawn from a unknown distribution $\mathcal{P}$ over the space $\mathcal{X}^a\times \mathcal{X}^b\times\mathcal{Y}$. MVD assumes that $(\mathbf{x}^a,\,\mathbf{x}^b,\,y)$ is generated by a quadruple $(\mathbf{z}^{sa},\,\mathbf{z}^{sb},\,\mathbf{z}^{0},y)\in\mathcal{Z}^{sa}\times\mathcal{Z}^{sb}\times\mathcal{Z}^0\times\mathcal{Y}$, following the generation rule: 
\begin{equation}
\begin{aligned}
\textsc{MVD Generation Rule:}\quad
&\mathbf{x}^a = \mathbf{g}^a(\mathbf{z}^{a}),\,&\mathbf{z}^a=[\mathbf{z}^{sa},\mathbf{z}^{0}]^T\in \mathcal{Z}^a=\mathcal{Z}^{sa}\times\mathcal{Z}^0\\
&\mathbf{x}^b = \mathbf{g}^b(\mathbf{z}^{b}),\,&\mathbf{z}^b= [\mathbf{z}^{sb},\mathbf{z}^{0}]^T \in \mathcal{Z}^b=\mathcal{Z}^{sb}\times\mathcal{Z}^0
\end{aligned}    
\end{equation}
where collectively, $\mathbf{g}^u(\cdot):\mathcal{Z}^u\mapsto \mathcal{X}^u$ denotes an unknown generating function, if we adopt the notation $u\in\{a,\,b\}$. To complete the MVD, another linear decision rule should be included. Specifically, the following equation:
\begin{equation}\label{eq:decision_rule}
\begin{aligned}
    \textsc{MVD Decision Rule:}\quad \exists\, \mathbf{W}^{u},\; &\argmax\,[\text{Softmax}(\mathbf{W}^{u}\mathbf{z}^u)]=\argmax\, [\mathbf{W}^{u}\mathbf{z}^u]=y\\
\end{aligned}
\end{equation}
is assumed to hold for any $(\mathbf{z}^{sa},\,\mathbf{z}^{sb},\,\mathbf{z}^{0},y)$, where we slightly abuse $\argmax[\cdot]$ and here it means the index of the largest element in the argument. In essence, MVD specifies that $\mathbf{x}^u$ is generated based on a \emph{modality-specific decisive feature} vector $\mathbf{z}^{su}$ and a \emph{modality-general decisive feature} vector $\mathbf{z}^0$ (generation rule), and that $\mathbf{z}^u$ is sufficient to linearly determine the label $y$ (decision rule). 

% It is worthy noting that under our proposed MVD model, we could comprehend that the teacher and the student essentially attempt to use a deep neural network to learn the mapping $\mathbf{g}^a(\cdot)$ and $\mathbf{g}^b(\cdot)$, respectively.

We proceed to quantify the correlation of \emph{modality-general decisive features} and \emph{modality-specific decisive features}. Let $\mathcal{Z}^{su}\subseteq\mathbb{R}^{d_{su}}$ and $\mathcal{Z}^0\subseteq\mathbb{R}^{d_0}$, so that $\mathcal{Z}^u\subseteq\mathbb{R}^{d_u}$, where $d_u=d_{su}+d_0$. We denote a ratio $\gamma=d_0/(d_0+d_{sa}+d_{sb})\in[0,1]$, which characterizes the ratio of \emph{modality-general decisive features} over all \emph{decisive features}. Similarly, $\alpha=d_{sa}/(d_0+d_{sa}+d_{sb})$ and  $\beta=d_{sb}/(d_0+d_{sa}+d_{sb})$ denotes the proportion of \emph{modality-specific decisive features} for modality $a$ and $b$ over all \emph{decisive features}, respectively, and we have $\alpha+\beta+\gamma=1$. 

\subsection{The Modality Focusing Hypothesis}
Based on MVD, we now revisit our observation in Sec. \ref{sec.efficacy} (\ie, teacher accuracy is not a key indicator of student performance) and provide explanations. First, teacher performance is decided by both \emph{modality-general decisive} and \emph{modality-specific decisive features} in modality $a$. In terms of student performance, although \emph{modality-specific decisive features} in modality $a$ are meaningful for the teacher, they can not instruct the student since the student only sees modality $b$. On the other hand, \emph{modality-general decisive features} are not specific to modality $b$ and could be transferred to the student. Coming back to the example in Fig. \ref{fig.mod}, if an audio teacher provides modality-specific information (\ie, the sound colored in red), the visual student will get confused as this information (\ie, playing violin) is not available in the visual modality. On the contrary, modality-general information can be well transferred across modalities and facilitates distillation as the audio teacher and visual student can both perceive the information about the left person playing guitar. This motivates the following modality focusing hypothesis (MFH).

\textbf{The Modality Focusing Hypothesis (MFH).} \emph{For crossmodal KD, distillation performance is dependent on the proportion of modality-general decisive features preserved in the teacher network: with larger $\gamma$, the student network is expected to perform better.} 

The hypothesis states that in crossmodal knowledge transfer, the student learns to ``focus on'' \emph{modality-general decisive features}. Crossmodal KD is thus beneficial for the case where $\gamma$ is large (\ie, multimodal data share many label-relevant information). Moreover, it accounts for our observation that teacher performance fails to correlate with student performance in some scenarios --- When $\alpha$ is large and $\gamma$ is small, the teacher network attains high accuracy primarily based on modality-specific information, which is not beneficial for the student's learning process.

To have an intuitive and quick understanding of our hypothesis, here we present two experiments with synthetic Gaussian data. More details can be found in Sec. \ref{sec.expgauss}. As shown in Fig. \ref{fig.gauss1}, we start from the extreme case where two modalities do not overlap, and gradually increase the proportion of \emph{modality-general decisive features} until all \emph{decisive features} are shared by two modalities. We observe that crossmodal KD fails to work when $\mathbf{x}^a$ and $\mathbf{x}^b$ share few \emph{decisive features} (\ie, $\gamma$ is small) since \emph{modality-specific decisive features} in modality $a$ are not perceived by the student. As $\gamma$ gradually increases, crossmodal KD becomes more effective. For the case where all \emph{decisive features} possess in both modalities, the student gains from teacher's knowledge and outperforms its baseline by 2.1\%. Note that the teacher accuracy does not vary much during this process, yet student performance differs greatly. 

Fig. \ref{fig.gauss2} illustrates the reverse process where \emph{modality-specific decisive features} in modality $a$ gradually dominate. With increasing $\alpha$, the teacher gradually improves since it receives more \emph{modality-specific decisive features} for prediction. However, the student network fails to benefit from the improved teacher and performs slightly worse instead. Clearly, teacher performance is not reflective of student performance in this case. These two sets of experiments help demonstrate that teacher accuracy does not faithfully reflect the effectiveness of crossmodal KD and lend support to our proposed hypothesis. 
% \emph{Modality-general decisive features} is the underlying dominant factor in crossmodal KD.

Apart from the two intuitive examples, below we provide a theoretical guarantee of MFH in an analytically tractable case, linear binary classification. Formally, considering an infinitesimal learning rate which turns the training into a continuous gradient flow defined on a time parameter $t\in[0,+\infty)$~\citep{phuong2019towards}. If $n$ data are available, which are collectively denoted as $\mathbf{Z}^{u}\in\mathbb{R}^{d_u\times n}$, we have the following theorem to bound the training distillation loss with $\gamma$.
% using the value of $\gamma$ in a linear binary classification.

% As an illustrating example, consider both $\mathbf{g}^a(\cdot)$ and $\mathbf{g}^b(\cdot)$ are identity function. Our intuition is that, as $\gamma$ gradually increases from $0$ to $1$, transferring the knowledge from a teacher $\mathbf{f}_{\bm{\theta}_t}(\cdot):\mathcal{X}^a\mapsto\mathcal{Y}$ to a student $\mathbf{f}_{\bm{\theta}_s}(\cdot):\mathcal{X}^b\mapsto\mathcal{Y}$ should become easier. Because (i) the feature $\mathbf{x}^a$ and $\mathbf{x}^b$ have more overlapping, and (ii) the non-overlapping parts do not have the capability to represent each other \footnote{To support this claim, as shown in Fig. \ref{fig.mod}, how could a picture of the left person represent the sound of the instrument in the hand of the right person?}. Same intuition applies to the case when $\mathbf{g}^a(\cdot)$ and $\mathbf{g}^b(\cdot)$ are complicated real multi-dimensional functions.

\begin{theorem}\label{thm:cmd_linear_binary}
(Crossmodal KD in linear binary classification). Without loss of generality, we assume $\mathbf{f}_{\bm{\theta}_t}(\cdot):\mathcal{X}^a\mapsto\mathcal{Y}$ and  $\mathbf{f}_{\bm{\theta}_s}(\cdot):\mathcal{X}^b\mapsto\mathcal{Y}$.
Suppose $\max\{||\mathbf{Z}^u\mathbf{Z}^{u,T}||,\,||(\mathbf{Z}^u\mathbf{Z}^{u,T})^{-1}||\}\leq\lambda$ always holds for both $u=a$ or $b$, and $\mathbf{g}^u(\cdot)$ are identity functions. If there exists $(\epsilon, \delta)$ such that
\begin{equation}
    Pr\left[||\mathbf{Z}^{a,T}\mathbf{Z}^{a}-\mathbf{Z}^{b,T}\mathbf{Z}^{b}||\leq (1-\gamma)\epsilon\right]\geq 1-\delta
\end{equation}
Then, with an initialization at $t=0$ satisfying $R_n^{\text{dis}}(\bm{\theta}_s(0))\leq q$,  we have, at least probability $1-\delta$:
\begin{equation}
    R_n^{\text{dis}}(\bm{\theta}_s(t=+\infty)) \leq n (\frac{\epsilon^\star}{1-e^{-\epsilon^\star}}-1-\ln{\frac{\epsilon^\star}{1-e^{-\epsilon^\star}}}) 
\end{equation}
where $\epsilon^\star=\lambda^{1.5}(\lambda^2+1)(1-\gamma)\epsilon$ and $R_n^{\text{dis}}(\bm{\theta}_s)$ is the empirical risk defined by KL divergence (corresponding to Eq.~(\ref{eq.general_kd_loss}) when $\rho=0$):
\begin{equation}
\begin{aligned}
R_n^{\text{dis}}(\bm{\theta}_s(t))=&\sum_{i=1}^n -\sigma(\bm{\theta}_t^T\mathbf{x}^a_i)\cdot\ln\frac{\sigma(\bm{\theta}_s^T\mathbf{x}^b_i)}{\sigma(\bm{\theta}_t^T\mathbf{x}^a_i)}-\left[1-\sigma(\bm{\theta}_t^T\mathbf{x}^a_i)\right]\cdot\ln\frac{1-\sigma(\bm{\theta}_s^T\mathbf{x}^b_i)}{1-\sigma(\bm{\theta}_t^T\mathbf{x}^a_i)}
\end{aligned}
\end{equation}
\end{theorem}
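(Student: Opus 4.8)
The plan is to reduce the dynamical statement to a purely variational one. First I would record that the distillation risk $R_n^{\text{dis}}(\bm{\theta}_s)=\sum_i \KL\!\big(\sigma(\bm{\theta}_t^T\mathbf{x}^a_i)\,\|\,\sigma(\bm{\theta}_s^T\mathbf{x}^b_i)\big)$ is \emph{convex} in $\bm{\theta}_s$: each summand equals $p_i\ln(1+e^{-s_i})+(1-p_i)\ln(1+e^{s_i})$ up to a $\bm{\theta}_s$-independent constant, i.e.\ a convex function of the logit $s_i=\bm{\theta}_s^T\mathbf{x}^b_i$ composed with a linear map. Differentiating gives the clean gradient $\nabla_{\bm{\theta}_s}R_n^{\text{dis}}=\mathbf{Z}^b(\sigma(\mathbf{s})-\mathbf{p})$, so along the gradient flow $\dot R_n^{\text{dis}}=-\|\mathbf{Z}^b(\sigma(\mathbf{s})-\mathbf{p})\|^2\le 0$ and the risk is monotonically nonincreasing. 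Since $R_n^{\text{dis}}$ is convex and coercive (the teacher probabilities lie strictly in $(0,1)$, so the optimal logits are finite), and $\mathbf{Z}^b$ has full row rank under $\|(\mathbf{Z}^b\mathbf{Z}^{b,T})^{-1}\|\le\lambda$, the flow converges as $t\to+\infty$ to the global minimizer; the finite initial risk $q$ merely confines $\bm{\theta}_s(0)$ to a bounded sublevel set, controlling the transient $R_n^{\text{dis}}(\bm{\theta}_s(t))-\min R_n^{\text{dis}}\le \|\bm{\theta}_s(0)-\bm{\theta}_s^\star\|^2/(2t)$, which vanishes. Thus it suffices to upper bound $\min_{\bm{\theta}_s}R_n^{\text{dis}}$.

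To bound the minimum I would exhibit one good student weight via \emph{shared dual coefficients}. Because $\mathbf{Z}^a$ has full row rank, write $\bm{\theta}_t=\mathbf{Z}^a\bm{\alpha}$ and set $\bm{\theta}_s^\circ:=\mathbf{Z}^b\bm{\alpha}$. Then the teacher and candidate-student logits are $\mathbf{t}=\mathbf{Z}^{a,T}\mathbf{Z}^a\bm{\alpha}=\mathbf{K}^a\bm{\alpha}$ and $\mathbf{s}^\circ=\mathbf{K}^b\bm{\alpha}$ with $\mathbf{K}^u=\mathbf{Z}^{u,T}\mathbf{Z}^u$, so the logit residual is exactly $\bm{\Delta}=(\mathbf{K}^a-\mathbf{K}^b)\bm{\alpha}$. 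On the event $\{\|\mathbf{K}^a-\mathbf{K}^b\|\le(1-\gamma)\epsilon\}$, which holds with probability at least $1-\delta$, every coordinate satisfies $|\Delta_i|\le\|\bm{\Delta}\|_2\le(1-\gamma)\epsilon\,\|\bm{\alpha}\|_2$. Taking $\bm{\alpha}$ of minimal norm gives $\|\bm{\alpha}\|_2\le\|\mathbf{Z}^{a,T}\|\,\|(\mathbf{Z}^a\mathbf{Z}^{a,T})^{-1}\|\,\|\bm{\theta}_t\|\le\lambda^{1/2}\!\cdot\!\lambda\!\cdot\!\|\bm{\theta}_t\|=\lambda^{1.5}\|\bm{\theta}_t\|$, and bounding $\|\bm{\theta}_t\|$ through the same conditioning assumptions supplies the remaining $(\lambda^2+1)$ factor, so that $\max_i|\Delta_i|\le\epsilon^\star=\lambda^{1.5}(\lambda^2+1)(1-\gamma)\epsilon$.

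The last ingredient is a sample-wise conversion from logit gap to KL. For one sample with teacher/student logits $t_i,s_i$ I would solve $\max\{\KL(\sigma(t)\|\sigma(s)):|t-s|\le c\}$: fixing the log-odds gap at its extreme and optimizing over the teacher confidence $\sigma(t)$ collapses to the closed form $h(c)=\tfrac{c}{1-e^{-c}}-1-\ln\tfrac{c}{1-e^{-c}}$ (consistent with the second-order check $h(c)\approx c^2/8$, which matches $\max_p \tfrac12 p(1-p)c^2$). Since $h$ is increasing on $[0,\infty)$, each summand is at most $h(\epsilon^\star)$, and summing over the $n$ samples yields $R_n^{\text{dis}}(\bm{\theta}_s^\circ)\le n\,h(\epsilon^\star)$; combined with $\min R_n^{\text{dis}}\le R_n^{\text{dis}}(\bm{\theta}_s^\circ)$ and the convergence of the flow this is exactly the asserted bound, valid with probability at least $1-\delta$.

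The main obstacle is the per-sample optimization producing the precise $h$: one must verify the constrained Bernoulli-KL maximum is attained on the boundary $|t-s|=c$ and that eliminating the optimal teacher confidence collapses to this exact expression rather than a mere quadratic surrogate. A secondary but delicate point is the operator-norm bookkeeping behind the prefactor $\lambda^{1.5}(\lambda^2+1)$ of $\epsilon^\star$, in particular the bound on $\|\bm{\theta}_t\|$; by contrast, the convexity, the gradient-flow convergence to the global minimizer, and the conditioning on the probability-$(1-\delta)$ Gram-closeness event are routine.
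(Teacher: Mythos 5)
Your proposal is correct in its overall architecture, which matches the paper's: exhibit one explicit student weight whose logits are uniformly within $\epsilon^\star$ of the teacher's logits, convert the per-sample logit gap into the exact KL bound $h(c)=\frac{c}{1-e^{-c}}-1-\ln\frac{c}{1-e^{-c}}$ (this is precisely the paper's Lemma A.1, proved there by the same two-step calculus you sketch: boundary attainment via $\partial l/\partial b=\sigma(b)-\sigma(a)$, then elimination of the teacher logit), and argue the gradient flow ends below that candidate's risk. But you implement both technical halves differently, and in ways worth noting. For the candidate, you use shared dual coefficients $\bm{\theta}_s^{\circ}=\mathbf{Z}^b\bm{\alpha}$ with $\bm{\theta}_t=\mathbf{Z}^a\bm{\alpha}$, giving the residual $(\mathbf{K}^b-\mathbf{K}^a)\bm{\alpha}$ directly; the paper instead takes the least-squares pullback $\bm{\theta}_s^{\star}=(\mathbf{Z}^b\mathbf{Z}^{b,T})^{-1}\mathbf{Z}^b\mathbf{Z}^{a,T}\bm{\theta}_t$ and needs a separate matrix lemma (its Lemma A.2, via the identity $(\mathbf{B}^T(\mathbf{B}\mathbf{B}^T)^{-1}\mathbf{B}\mathbf{A}^T-\mathbf{A}^T)\mathbf{A}\mathbf{A}^T=(\mathbf{B}^T(\mathbf{B}\mathbf{B}^T)^{-1}\mathbf{B}-\mathbf{I})(\mathbf{A}^T\mathbf{A}-\mathbf{B}^T\mathbf{B})\mathbf{A}^T$) to produce the factor $\lambda^{1.5}(\lambda^2+1)$. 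Under the normalization $\|\bm{\theta}_t\|=1$ your route gives the sharper prefactor $\lambda^{1.5}$, and since $h$ is nondecreasing this still yields the stated bound. For convergence, you use plain convexity of the risk in $\bm{\theta}_s$ plus coercivity and full row rank of $\mathbf{Z}^b$ to get existence of the minimizer and the $O(1/t)$ value bound; the paper instead imports a Polyak--{\L}ojasiewicz-type inequality on the sublevel set $\{R_n^{\text{dis}}\leq q\}$ from \citep{phuong2019towards}, which buys an exponential rate but requires the initialization hypothesis $R_n^{\text{dis}}(\bm{\theta}_s(0))\leq q$ that your argument renders unnecessary. Your approach is arguably cleaner on both counts.

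One concrete misstep: your claim that "bounding $\|\bm{\theta}_t\|$ through the same conditioning assumptions supplies the remaining $(\lambda^2+1)$ factor" is unjustified --- nothing in the Gram-matrix conditions $\max\{\|\mathbf{Z}^u\mathbf{Z}^{u,T}\|,\|(\mathbf{Z}^u\mathbf{Z}^{u,T})^{-1}\|\}\leq\lambda$ constrains the norm of the trained teacher weight, which is a free parameter. The paper handles this by declaring $\|\bm{\theta}_t\|=1$ without loss of generality (on the grounds that scaling does not change the hard prediction; note this does change the soft targets, so it is a modeling convention rather than a true WLOG, but it is the convention the theorem is proved under). You should simply adopt the same normalization, at which point your bound $\max_i|\Delta_i|\leq\lambda^{1.5}(1-\gamma)\epsilon\leq\epsilon^\star$ holds and monotonicity of $h$ closes the argument; as written, the sentence purporting to derive $(\lambda^2+1)$ is the only unsound step in an otherwise valid proof.
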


See Appendix~\ref{ap_sec.proof} for the omitting proof, several important remarks, and future improvements.

\subsection{Implications} \label{sec.imp}
We have presented MVD and MFH. Equipped with this new perspective of crossmodal KD, we discuss their implications and practical applications in this section.

\textbf{Implication.} For crossmodal KD, consider two teachers with identical architectures and similar performance: Teacher (a) makes predictions primarily based on \emph{modality-general decisive features} while teacher (b) relies more on \emph{modality-specific decisive features}. We expect that the student taught by teacher (a) yields better performance than that by teacher (b).

The implication above provides us with ways to validate MFH. It also points directions to improve crossmodal KD --- We can train a teacher network that focuses more on \emph{modality-general decisive features} for prediction. Compared with a regularly-trained teacher, the new teacher is more modality-general (\ie, has a larger $\gamma$) and thus tailored for crossmodal knowledge transfer. 

Note that: Firstly, identical architectures and performance of the two teachers are stated here for a fair comparison. Similar performance of the two teachers translate to similar ability to extract \emph{decisive features} for prediction, and thus the only difference lies in the amount of \emph{modality-general decisive features}. This design excludes other factors and helps justify that the performance difference stems from $\gamma$. In fact, we observe that even with inferior accuracy than teacher (b), teacher (a) still demonstrates better crossmodal KD performance in experiments. Secondly, the main focus of this paper is to present the MFH and to validate it with theoretical analysis, synthetic experiments, and evidence from experiments conducted on real-world multimodal data. Contrary to the common belief, we detach the influence of teacher performance in crossmodal KD and point out that \emph{modality-general decisive features} are the key. Developing methods to separate \emph{modality-general/specific decisive features} from real-world multimodal data is beyond the scope of this paper and left as future work. 
% In the experiments section, we design different training approaches to obtain teacher models with different $\gamma$ and show that the implication holds true for several multimodal datasets. % (\ie, a teacher with larger $\gamma$ yields better distillation performance). 
\section{Experimental Results}\label{sec.exp}

\subsection{Experimental Setup} \label{sec.setup}
To justify our MFH, we conduct experiments on 6 multimodal datasets (synthetic Gaussian, AV-MNIST, RAVDESS, VGGSound, NYU Depth V2, and MM-IMDB) that cover a diverse combination of modalities including images, video, audio and text. In essence, we design approaches to obtain teachers of different $\gamma$ and perform crossmodal KD to validate the implication presented in Sec. \ref{sec.imp}.

We consider four different ways to derive a teacher network that attends to more or fewer \emph{modality-general decisive features} than a regularly-trained teacher: (1) For synthetic Gaussian data, since the multimodal data generation mechanism is known, we train a modality-general teacher on data with only \emph{modality-general decisive features} preserved and other channels removed; (2) For NYU Depth V2, we notice that RGB images and depth images share inherent similarities and possess identical dimensions, allowing them to be processed using a single network. Therefore, we design a modality-general teacher (\ie, has a larger $\gamma$ than a regularly-trained teacher) by following the training approach in ~\citep{girdhar2022omnivore}; (3) For MM-IMDB data, we follow the approach in ~\citep{xue2021multimodal} to obtain a multimodal teacher which is more modality-specific (\ie, has a smaller $\gamma$) than the regularly-trained teacher; (4) For the other datasets, we design an approach based on feature importance~\citep{breiman2001random,wojtas2020feature} to rank all feature channels according to the amount of modality-general decisive information. With a sorted list of all features, we train a modality-general teacher by only keeping feature channels with large salience values and a modality-specific teacher by keeping features with small values. In summary, a wide range of approaches and tasks are considered to justify MFH. See Appendix~\ref{ap_sec.exp} for detailed setups and some results.
% Due to space limit, we place detail setups and some results (such as those on MM-IMDB) in Appendix~\ref{ap_sec.exp}.

\subsection{Synthetic Gaussian}\label{sec.expgauss}
\begin{table}[!ht]
\begin{center}
\caption{Results on synthetic Gaussian data. Compared with a regular teacher, the modality-general teacher has downgraded performance yet leads to a student with increasing accuracy.}
\scalebox{0.8}{
\begin{tabular}{ccccccc}
\toprule
 \multicolumn{2}{c}{Regular Teacher} &  
\multicolumn{2}{c}{Modality-general Teacher} & \multicolumn{3}{c}{Student Acc. (\%)}\\
\cmidrule(lr){1-2}
\cmidrule(lr){3-4}
\cmidrule(lr){5-7}
 $\gamma $& Acc. (\%) &  $\gamma $& Acc. (\%) & No-KD & Regular-KD & Modality-general-KD \\
\midrule
0.25 & 89.70 & 1.0 & 64.84 (\dd) & 59.29 & 59.46 & 61.01 (\uu)\\
% 0.33 & 89.41 & 1.0 & 68.42 (\dd) & 61.98 & 62.16 & 64.50 (\uu)\\
0.50 & 89.62 & 1.0 & 73.41 (\dd) & 67.64 & 67.86 & 70.25 (\uu)\\ 
% 0.66 & 89.82 & 1.0 & 77.98 (\dd) & 73.48 & 73.60 & 75.43 (\uu) \\
0.75 & 89.70 & 1.0 & 79.41 (\dd) & 76.46 & 76.53 & 77.70 (\uu)\\
\bottomrule
\end{tabular}
}
\label{tab.gauss}
\end{center}
\end{table}
We extend the Gaussian example in ~\citep{lopez2015unifying} to a multimodal scenario (see Appendix~\ref{ap_sec.gauss} for details). To validate MFH, we train two teacher networks: (1) a regular teacher is trained on data with all input feature channels; (2) a modality-general teacher is trained on data with \emph{modality-general decisive features} preserved and other channels removed, thus this teacher has $\gamma=1$. We experiment with different data generation methods, and the regular teachers have different $\gamma$ correspondingly. Crossmodal KD results obtained by these two teachers are presented in Table \ref{tab.gauss}. 
% Results are averaged over 10 runs.
We observe considerable performance degradation (larger than -10\% accuracy loss) of the modality-general teacher than the regular teacher, as it only relies on \emph{modality-general decisive features} and discards \emph{modality-specific decisive features} for prediction. However, the modality-general teacher still facilitates crossmodal KD and leads to an improved student ($\sim$ +2\% accuracy improvement compared with regular crossmodal KD). The results align well with MFH stating that a teacher with more emphasis on \emph{modality-general decisive features} (\ie, has a larger $\gamma$) yields a better student. 

\subsection{NYU Depth V2}\label{sec.expnyu}
We revisit the example of NYU Depth V2 in Sec. \ref{sec.efficacy}. We adopt a teacher network that takes depth images as input to transfer knowledge to an RGB student. Both student and teacher network architectures are implemented as DeepLab V3+~\citep{chen2018encoder}. As described in Sec. \ref{sec.setup}, besides training a regular teacher, we follow ~\citep{girdhar2022omnivore} and train a teacher that learns to predict labels for the two modalities with identical parameters. To be specific, a training batch contains both RGB and depth images, and the teacher network is trained to output predictions given either RGB or depth images as input. As such, the resulting teacher is assumed to extract more modality-general features for decision (\ie, has a larger $\gamma$ than a regular teacher) since it needs to process both modalities in an identical way during training. 
% All distillation configurations and teacher model architecture are identical to regular crossmodal KD to ensure a fair comparison. 

As shown in Table \ref{tab.nyu}, regular crossmodal KD does not bring many advantages: the student achieves a similar mIoU compared with the No-KD baseline. Therefore, one might easily blame the failure of crossmodal KD on teacher accuracy and assume that crossmodal KD is not effective because the depth teacher itself yields poor performance (\ie, has an mIoU of 37.33\%). By nature of its training approach, the modality-general teacher is forced to extract more \emph{modality-general decisive features} for prediction rather than rely on depth-specific features as it also takes RGB images as input. While we do not observe difference in teacher performance, the modality-general teacher turns out to be a better choice for crossmodal KD: its student mIoU improves from 46.36\% to 47.93\%. The results indicate that our MFH has the potential to diagnose crossmodal KD failures and lead to improvement. 

\begin{table}[!h]
\begin{center}
\caption{Results on NYU Depth V2 semantic segmentation. Teacher modality is depth images and student modality is RGB images. Compared with regular crossmodal KD, using a modality-general teacher leads to better distillation performance.}
\scalebox{0.8}{
\begin{tabular}{ccccc}
\toprule
 & \multicolumn{2}{c}{Teacher} & \multicolumn{2}{c}{Student} \\
 & Modality & mIoU (\%) & Modality & mIoU (\%) \\
 \midrule
 No-KD & - & - & \texttt{RGB} & 46.36\\
 Regular-KD & \texttt{D} & 37.33 & \texttt{RGB} & 46.89 \\
 Modality-general-KD & \texttt{RGB/D} & 37.47 & \texttt{RGB} & 47.93 (\uu)\\
\bottomrule
\end{tabular}
}
\label{tab.nyu}
\end{center}
\end{table}

\begin{minipage}[!b]{\textwidth}
  \begin{minipage}[c]{0.5\textwidth}
 \begin{tabular}{ccc}
         \includegraphics[width=0.23\textwidth]{./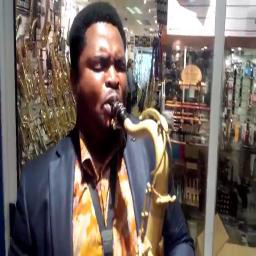}\hspace{10mm}& \includegraphics[width=0.23\textwidth]{./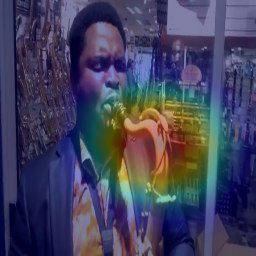}& \includegraphics[width=0.23\textwidth]{./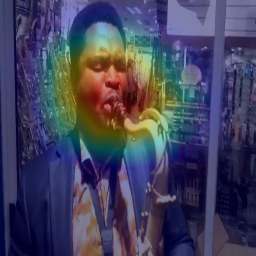} \\
         \includegraphics[width=0.23\textwidth]{./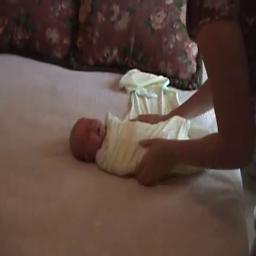}\hspace{10mm}& \includegraphics[width=0.23\textwidth]{./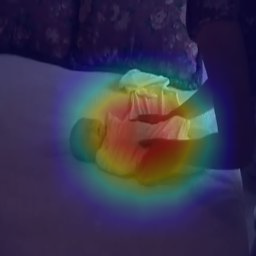}& \includegraphics[width=0.23\textwidth]{./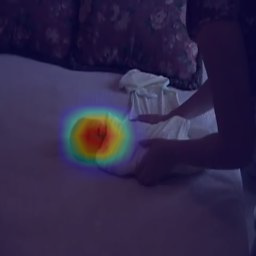} \\
        %  Image & All Features &  Mod.-general  
    \end{tabular}
    \captionof{figure}{The class activation maps of a regular teacher (middle) and a modality-general teacher (right) on VGGSound. A regular teacher attends to all \emph{decisive features} (\ie, the visual objects) while a modality-general one focuses on \emph{modality-general decisive features} (\ie, the area of vocalization).}
    \label{fig.visualization}
    \end{minipage}
    \hfill
    \begin{minipage}[c]{0.43\textwidth}
    \centering
    \includegraphics[width=0.7\linewidth]{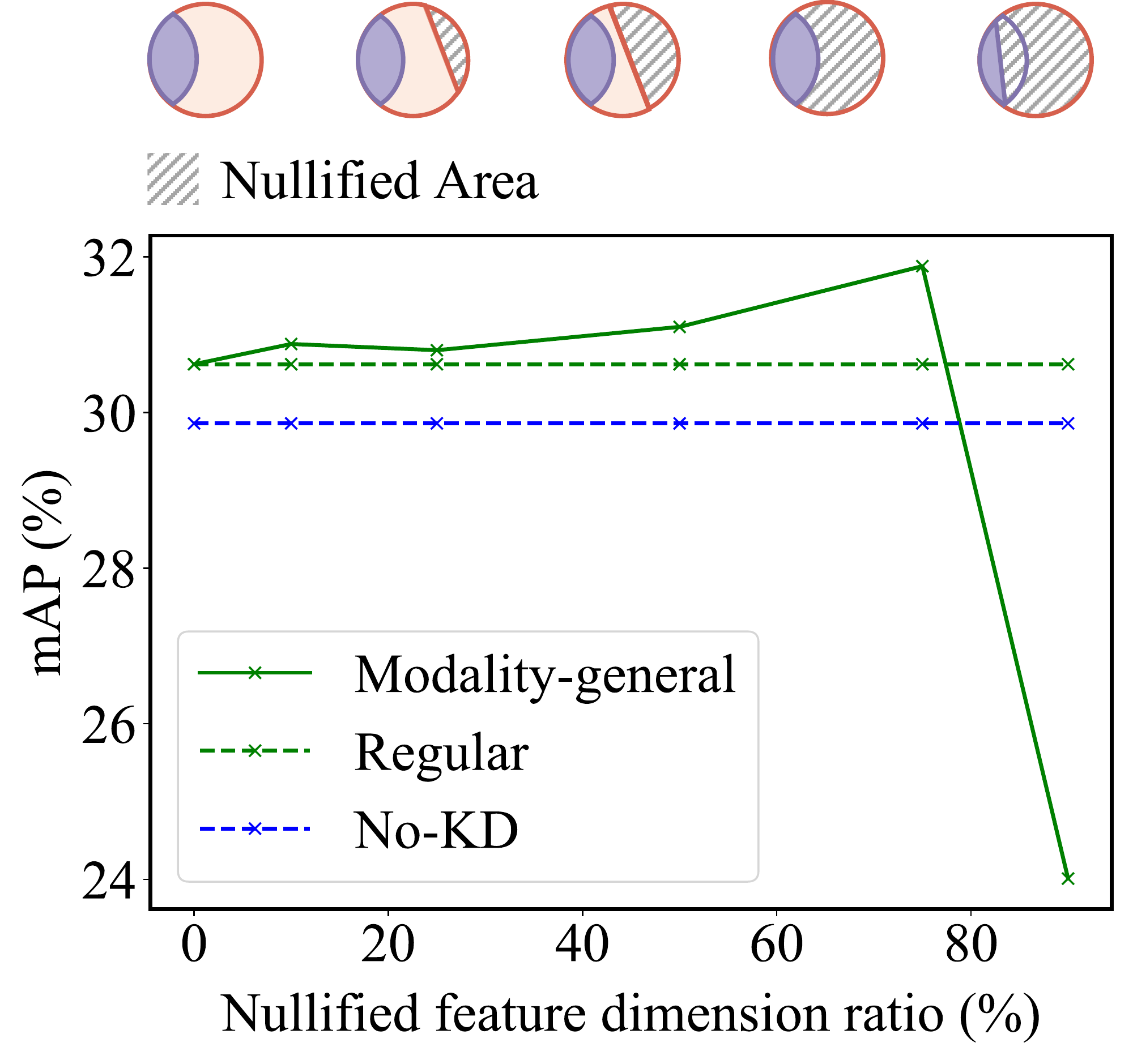}
    \captionof{figure}{With more feature channels getting nullified, student performance starts to increase in the beginning and then decrease; the process aligns well with MVD.}
    \label{fig.abl_vgg}
  \end{minipage}
  \end{minipage}

\subsection{RAVDESS and VGGSound}\label{sec.exp_combine}
Besides the approaches presented above, we design a permutation-based method to sort the features according to the salience of \emph{modality-general decisive features}, which allows us to obtain teacher networks that possess different amount of $\gamma$. %To be specific, given any pair of multimodal features $(\mathbf{x}^a\in \mathbb{R}^{D_1}, \mathbf{x}^b\in \mathbb{R}^{D_2})$, the algorithm returns a saliency vector $\mathbf{p}\in [0,1]^{D_1}$ that characterizes how much modality-general decisive information possess in modality $a$. Its $d$-th entry $p_d$ reflects the saliency of the $d$-th feature dimension, and a larger saliency value indicates a more \emph{modality-general decisive} feature channel. Therefore, we can obtain a teacher by nullifying the first $r\%$ feature channels according to $\mathbf{p}$ sorted in ascending order. Here nullifying features means setting the corresponding feature dimension to be its batch mean, which is identical for all samples in this batch. The new teacher is expected to be more modality-general than a regular teacher since we nullify feature channels that are less correlated with modality-general decisive information. Similarly, a modality-specific teacher is achieved by nullifying the first $r\%$ feature channels with $\mathbf{p}$ sorted in descending order and a random teacher is derived by randomly nullifying $r\%$ channels. The feature nullifying ratio $r\%$ is a hyperparameter and we experiment with different values of $r$ in the later experiments. The detailed algorithm flow is presented in Appendix \ref{ap_sec.combined} due to space limit. 
See Appendix \ref{ap_sec.combined} for the detailed algorithm flow. We apply this approach to RAVDESS~\citep{ravdess} and VGGSound~\citep{chen2020vggsound}. RAVDESS is an audio-visual dataset containing 1,440 emotional utterances with 8 different emotion classes. Teacher modality is audio and student modality is images. The student and teacher network follow the unimodal network design in ~\citep{xue2021multimodal}; VGGSound is a large-scale audio-visual event classification dataset including over 200,000 video and 310 classes. We consider two setups: (1) adopting an audio teacher and a video student; (2) using a video teacher for distillation to an audio student. We also experiment with two architectures, ResNet-18 and ResNet-50~\citep{he2016deep} to be the teacher and student network backbone.% More experimental details are provided in Appendix \ref{ap_sec.combined}. 
\vspace*{-1mm}
\begin{table}[!h]
\begin{center}
\caption{Results on RAVDESS Emotion Recognition and VGGSound Event Classification. \texttt{A}, \texttt{I} and \texttt{V} denotes audio, images and video, respectively. We report student test accuracy (\%) for RAVDESS and mean Average Precision (\%) for VGGSound. With identical model architecture, a modality-general teacher improves regular KD while a modality-specific teacher leads to downgraded performance. }
\scalebox{0.8}{
\begin{tabular}{ccc|cccc}
\toprule
% Task & \multicolumn{2}{c}{Teacher} & \multicolumn{5}{c}{Student} \\
Task & T Mod. & S Mod. & Regular & Mod.-specific & Random & Mod.-general \\
\midrule
RAVDESS & \texttt{A} & \texttt{I} & 77.22 & 42.66 (\dd) & 64.75 & 78.28 (\uu)\\
VGGSound (RN-18) & \texttt{A} & \texttt{V} & 30.62 & 24.98 (\dd) & 28.99 & 31.88 (\uu)\\
VGGSound (RN-50) & \texttt{A} & \texttt{V} & 38.78 & 28.65 (\dd) & 35.95 & 39.81 (\uu)\\
VGGSound (RN-50) & \texttt{V} & \texttt{A} & 58.87 & 31.28 (\dd) & 56.98 & 59.46 (\uu)\\
\bottomrule
\end{tabular}
}
\label{tab.combined}
\end{center}
\end{table}
\vspace*{-3mm}
% Therefore, we can obtain a teacher by nullifying the first $r\%$ feature channels according to $\mathbf{p}$ sorted in ascending order. Here nullifying features means setting the corresponding feature dimension to be its batch mean, which is identical for all samples in this batch. The new teacher is expected to be more modality-general than a regular teacher since we nullify feature channels that are less correlated with modality-general decisive information. Similarly, a modality-specific teacher is achieved by nullifying the first $r\%$ feature channels with $\mathbf{p}$ sorted in descending order and a random teacher is derived by randomly nullifying $r\%$ channels. The feature nullifying ratio $r\%$ is a hyperparameter and we experiment with different values of $r$ in the later experiments.
In our algorithm, we sort feature channels according to the salience of \emph{modality-general decisive features}. A modality-general teacher is derived by nullifying $r\%$ feature channels with smallest salience values. Similarly, a modality-specific teacher is obtained by nullifying the $r\%$ feature channels with largest salience values and a random teacher is derived by randomly nullifying $r\%$ channels. The feature nullifying ratio $r\%$ is a hyperparameter and we experiment with different values of $r$. While the exact value of $\gamma$ for different teacher networks is unknown, we know that $\gamma$ increases in the order of modality-specific, random and modality-general teacher. As shown in Table \ref{tab.combined}, crossmodal KD performance aligns well with this order. For RAVDESS emotion recognition, a modality-specific teacher results in a significantly downgraded student (\ie, accuracy drops from 77.22\% to 42.66\%) while a modality-general teacher improves regular KD by 1.06\%. Similarly, for VGGSound event classification, modality-general crossmodal KD improves the video student performance from 30.62\% to 31.88\% (for ResNet-18) and from 38.78\% to 39.81\% (for ResNet-50). These results demonstrate the critical role of \emph{modality-general decisive features} in crossmodal KD. 

% \noindent\textbf{Visualization of Modality-General Features.} As shown in Figure \ref{fig:cam}, we plot the class activation map (CAM) from all features and modality-general features. We find that if the model takes all video features for classification, the most interesting part of the images are the objects like the Saxophone or the baby. In contrast, if the model only takes modality-general features for classification, the most interesting part of the images are the the area of vocalization like the mouth of people.

Moreover, to provide an intuitive understanding of how a modality-general teacher differs from the regular teacher, we present visualization results on VGGSound data in Fig. \ref{fig.visualization}. We see that a regular video teacher utilizes all \emph{decisive features} for classification, and attends to the visual objects (\ie, the Saxophone and the baby). On the contrary, a modality-general teacher focuses more on information available in both the visual and audio modality, thus the area of vocalization get most activated. 
  
Finally, we vary feature nullifying ratio $r\%$ for VGGSound data and plot the student performance curve along with $r\%$. From Fig. \ref{fig.abl_vgg}, we observe that there exists a sweet spot for modality-general KD. As $r\%$ increases, the student performance improves in the beginning. The improvement indicates that \emph{non modality-general decisive features} in the teacher are gradually discarded, which in turn results in a better student. Later, after all \emph{non modality-general decisive} information are discarded, the feature nullifying process starts to hinder student performance as \emph{modality-general decisive features} get nullified as well. The modality Venn diagram corresponding to this process is depicted in the upper figure. The observed performance curve aligns well with our understanding on MVD.    

% Moreover, we provide an ablation study on the ratio of feature dimension that gets nullified in Fig. \ref{fig.abl_vgg}. We can see that there exists a sweet spot for modality-general KD. As the ratio increases, the student performance improves in the beginning. The improvement indicates that non \gd \ in the teacher are gradually discarded, which in turn results in a better student. Later, after all non general decisive information are removed, the nullifying process starts to hinder student performance as \gd \ get nullified as well. These results align well with our understanding of the modality Venn diagram and demonstrate the efficacy of our proposed Algorithm \ref{alg.sep}. These results 

\section{Conclusion and Future Work}\label{sec.con}
% https://icml.cc/2012/papers/625.pdf
% Moreover, our analysis of multimodal KD is not alone and comes with general understanding of multimodal learning. We believe that the implications are conceptually intriguing and shed light on multimodal understanding. We hope this work will raise interest for understanding KD as well.
% Finally, we hope the proposed hypothesis and its implications shed light on multimodal KD and could be regarded as a building stone for understanding multimodal learning in the future work.

In this work, we present a thorough investigation of crossmodal KD. 
The proposed MVD and MFH characterize multimodal data relationships and reveal that \emph{modality-general decisive features} are the key in crossmodal KD. 
We present theoretical analysis and conduct various experiments to justify MFH. We hope MFH shed light on applications of crossmodal KD and will raise interest for general understanding of multimodal learning as well. Future work includes: (i) deriving a more profound theoretical analysis of crossmodal KD, (ii) differentiating \emph{modality-general/specific decisive} features for real-world data, and (iii) improving multimodal fusion robustness based on MVD.

% (iii) proposing an approach based on MV to enhance general multimodal learning.

% As for future work, we plan to investigate how MFH benefits general multimodal learning. One possible direction is to improve multimodal fusion robustness by emphasizing \emph{modality-general decisive features}.

% Contradictory to the common perception that a more accurate multimodal network can serve as a better teacher

% \section*{Reproducibility Statement}
% We attach the code in the supplementary material, and will release the full code and training scripts to replicate all the experiments in the paper once published. All datasets and code platform (PyTorch) we use are public, and experimental details are provided in Appendix~\ref{ap_sec.exp}.

\section*{Acknowledgement}

The authors would like to thank Hangyu Lin (HKUST) and Zikai Xiong (MIT) for providing crucial steps in proving Theorem~\ref{thm:cmd_linear_binary}, and the anonymous reviewers for valuable feedbacks.

\bibliography{iclr2023_conference}
\bibliographystyle{iclr2023_conference}

\newpage
\appendix
\section{Proof of Theorem \ref{thm:cmd_linear_binary}}\label{ap_sec.proof}

\begin{lemma} 
Consider a function $l(b,a)$:
\begin{equation}
    l(b,a)=-\sigma(a)\left[\ln\sigma(b)-\ln\sigma(a)\right]-(1-\sigma(a))\left[\ln(1-\sigma(b))-\ln(1-\sigma(a))\right]
\end{equation}
where $\sigma(\cdot)$ represents the sigmoid function. When $|a-b|\leq \epsilon$, 
\begin{equation}
\max l(b,a)=\frac{\epsilon}{1-e^{-\epsilon}}-1-\ln{\frac{\epsilon}{1-e^{-\epsilon}}}     
\end{equation}
\end{lemma}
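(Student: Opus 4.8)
The plan is to recognize $l(b,a)$ as the Kullback--Leibler divergence between two Bernoulli distributions with parameters $\sigma(a)$ and $\sigma(b)$, namely $l(b,a)=\sigma(a)\ln\frac{\sigma(a)}{\sigma(b)}+(1-\sigma(a))\ln\frac{1-\sigma(a)}{1-\sigma(b)}$, and then to maximize it over the feasible region $|a-b|\le\epsilon$. First I would treat $a$ as fixed and study $l$ as a function of $b$. A direct differentiation, using $\sigma'=\sigma(1-\sigma)$, collapses neatly to $\frac{\partial l}{\partial b}=\sigma(b)-\sigma(a)$. Since $\sigma$ is strictly increasing, this derivative is negative for $b<a$ and positive for $b>a$, so $l$ decreases and then increases in $b$ with minimum value $0$ at $b=a$. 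Consequently, on the interval $b\in[a-\epsilon,\,a+\epsilon]$ the maximum is attained at one of the two endpoints, i.e. at $|a-b|=\epsilon$, and the interior of the constraint can be discarded.

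Next I would exploit the symmetry $l(-b,-a)=l(b,a)$, which holds because $\sigma(-x)=1-\sigma(x)$ turns the Bernoulli KL into the same quantity with the two outcomes swapped. This lets me fix the boundary $b=a-\epsilon$ without loss of generality and optimize over $a\in\mathbb{R}$. The decisive step is an algebraic simplification: writing $p=\sigma(a)$ and using $\frac{q}{1-q}=e^{-\epsilon}\frac{p}{1-p}$ to express $q=\sigma(b)$ through $p$, one finds $\frac{p}{q}=e^{\epsilon}-p(e^{\epsilon}-1)$ and $\frac{1-p}{1-q}=1-pc$ with $c=1-e^{-\epsilon}$; the two logarithmic terms then combine and almost everything cancels, leaving the strikingly simple boundary objective $l=\ln(1-pc)+\epsilon p$, where I have used $\ln(1-c)=-\epsilon$.

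From here the maximization is routine. The map $p\mapsto\ln(1-pc)+\epsilon p$ is concave on $(0,1)$, since its second derivative is $-c^2/(1-pc)^2<0$, so setting the first derivative $\epsilon-\frac{c}{1-pc}$ to zero gives the unique maximizer $p^\star=\frac{1}{1-e^{-\epsilon}}-\frac{1}{\epsilon}$, which I would check lies in $(0,1)$. Substituting the critical-point identity $1-p^\star c=c/\epsilon$ back into $l$ yields $\ln\frac{c}{\epsilon}+\frac{\epsilon}{c}-1$, and recognizing $\frac{\epsilon}{c}=\frac{\epsilon}{1-e^{-\epsilon}}$ produces exactly $\frac{\epsilon}{1-e^{-\epsilon}}-1-\ln\frac{\epsilon}{1-e^{-\epsilon}}$, as claimed.

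I expect the main obstacle to be precisely the cancellation in the second paragraph: a naive attempt to set $\frac{dl}{da}=0$ directly on the boundary produces a transcendental equation in $a$ whose solution is opaque, and the clean closed form only emerges after the substitution $p=\sigma(a)$ reduces the boundary objective to $\ln(1-pc)+\epsilon p$. Verifying that this reduction is exact, by tracking the $\frac{p}{q}=e^{\epsilon}-p(e^{\epsilon}-1)$ and $\frac{1-p}{1-q}=1-pc$ identities, is the one computation worth carrying out carefully; everything downstream is elementary single-variable calculus.
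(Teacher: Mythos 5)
Your proof is correct and takes essentially the same route as the paper's: both first compute $\frac{\partial l}{\partial b}=\sigma(b)-\sigma(a)$ to push the maximum to the endpoints $b=a\pm\epsilon$, and then reduce the resulting boundary objective to a one-variable problem solved in closed form, your substitution $p=\sigma(a)$ giving the objective $\ln(1-pc)+\epsilon p$ being just a reparametrization of the paper's $t=e^{-a}$ (since $p=1/(1+t)$). Your symmetry observation $l(-b,-a)=l(b,a)$, which lets you treat a single endpoint, is a mild streamlining of the paper's proof, which optimizes $\tilde l_{+}(a)=l(a+\epsilon,a)$ and $\tilde l_{-}(a)=l(a-\epsilon,a)$ separately and then verifies that the two maxima coincide.
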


\begin{proof}
We only need to prove the following two things: (i) $l(b,a)\leq l(a\pm\epsilon, a)$, and (ii) $l(a\pm\epsilon, a)\leq \frac{\epsilon}{1-e^{-\epsilon}}-1-\ln{\frac{\epsilon}{1-e^{-\epsilon}}}$. For (i), we imagine $a$ is fixed, and calculate the derivative of $l(b,a)$ w.r.t. $b$, yielding:

\begin{equation}
    \frac{\partial l}{\partial b}=\frac{e^b-e^a}{(1+e^a)(1+e^b)}=\sigma(b)-\sigma(a)
\end{equation}

Thus, when varying $b$ in the range $[a-\epsilon, a+\epsilon]$, the value of $l(b,a)$ first decreases and then increases. It implies $l(b,a)\leq l(a\pm\epsilon, a)$.

To prove (ii), we first consider the function:
\begin{equation}
\begin{aligned}
\Tilde{l}_{+}(a) &= l(a+\epsilon,a)\\
     &=-\sigma(a)\left[\ln\sigma(a+\epsilon)-\ln\sigma(a)\right]-(1-\sigma(a))\left[\ln(1-\sigma(a+\epsilon))-\ln(1-\sigma(a))\right]\\
    %  &= \sigma(a)\left[\ln\frac{1-\sigma(a+\epsilon)}{1-\sigma(a)}-\ln\frac{\sigma(a+\epsilon)}{\sigma(a)}\right]-\ln\frac{1-\sigma(a+\epsilon)}{\sigma(a)}\\
     &=\sigma(a)\left[\ln\frac{\sigma(a)}{1-\sigma(a)}-\ln\frac{\sigma(a+\epsilon)}{1-\sigma(a+\epsilon)}\right]-\ln\frac{1-\sigma(a+\epsilon)}{1-\sigma(a)}\\
     &=-\sigma(a)\cdot \epsilon-\ln\frac{1-\sigma(a+\epsilon)}{1-\sigma(a)}\\
     &=-\epsilon\frac{1}{1+e^{-a}}-\ln\frac{e^{-(a+\epsilon)}}{e^{-a}}\frac{1+e^{-a}}{1+e^{-(a+\epsilon)}}\\
     &=-\epsilon\frac{1}{1+e^{-a}}+\epsilon-\ln\frac{1+e^{-a}}{1+e^{-(a+\epsilon)}}\\
    &=-\epsilon\frac{1}{1+t}+\epsilon-\ln\frac{1+t}{1+t e^{-\epsilon}}
\end{aligned}
\end{equation}
where in the last line, we denote $t=e^{-a}$. Now, let us calculate the derivative of $\Tilde{l}_{+}$ w.r.t. $t$:
\begin{equation}
\begin{aligned}
    \frac{\partial \Tilde{l}_{+}}{\partial t}
    &=\frac{\epsilon}{(1+t)^2}-\frac{1}{1+t}+\frac{e^{-\epsilon}}{1+t e^{-\epsilon}}\\
    &=\frac{\epsilon-1-t}{(1+t)^2}+\frac{1}{e^\epsilon+t}\\
    &= \frac{(1+t)^2-(t+e^\epsilon)(t-\epsilon+1)}{(1+t)^2(e^{\epsilon}+t)}\\
\end{aligned}
\end{equation}
Obviously, the denominator is always larger than zero. After expanding the numerator, we see that when $t=\frac{e^\epsilon(1-\epsilon)-1}{1+\epsilon-e^{\epsilon}}$, $\Tilde{l}_{+}$ achieves its maximum. Therefore, according to $a=-\ln t$, we conclude the maximum value of $\Tilde{l}_{+}(a)$ is $\frac{\epsilon}{1-e^{-\epsilon}}-1-\ln{\frac{\epsilon}{1-e^{-\epsilon}}}$, achieved at $a=-\ln\frac{e^\epsilon(1-\epsilon)-1}{1+\epsilon-e^{\epsilon}}$. Similarly, we could define a function  $\Tilde{l}_{-}(a)=l(a-\epsilon,a)$ and verify that the maximum value of $\Tilde{l}_{-}(a)$ is $\frac{-\epsilon}{1-e^{\epsilon}}-1-\ln{\frac{-\epsilon}{1-e^{\epsilon}}}$. Furthermore, after some simplifications, these two expressions, i.e., $\max \Tilde{l}_{+}(a)$ and $\max \Tilde{l}_{-}(a)$, could be proved identical.
\end{proof}

\begin{corollary}\label{cor:kl_bounded}
For two vectors $\mathbf{a}\in\mathbb{R}^n$ and $\mathbf{b}\in\mathbb{R}^n$, if $||\mathbf{a}-\mathbf{b}|| \leq \epsilon$, then 
\begin{equation}
    L_n(\mathbf{b},\mathbf{a}) = \sum_{i=1}^n l(b_i,a_i) \leq n(\frac{\epsilon}{1-e^{-\epsilon}}-1-\ln{\frac{\epsilon}{1-e^{-\epsilon}}})
\end{equation}
\end{corollary}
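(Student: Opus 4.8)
The plan is to derive the vector bound coordinate-by-coordinate from the scalar Lemma, so that essentially all of the analytic work is already done. Throughout, write $\phi(\epsilon)=\frac{\epsilon}{1-e^{-\epsilon}}-1-\ln\frac{\epsilon}{1-e^{-\epsilon}}$ for the scalar bound established in the Lemma.

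First I would pass from the single norm constraint to $n$ coordinate constraints. Since $\|\mathbf{a}-\mathbf{b}\|\le\epsilon$, each coordinate obeys $|a_i-b_i|\le\epsilon$: for the Euclidean norm this is immediate from $(a_i-b_i)^2\le\sum_{j=1}^n(a_j-b_j)^2=\|\mathbf{a}-\mathbf{b}\|^2\le\epsilon^2$, and the same coordinate-wise domination holds for any $\ell^p$ norm. Hence every pair $(a_i,b_i)$ satisfies the hypothesis $|a_i-b_i|\le\epsilon$ of the Lemma.

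Next I would apply the Lemma directly to each summand. Because the Lemma bounds $l$ uniformly over the entire feasible set $\{(a,b):|a-b|\le\epsilon\}$ — its conclusion being $\max l=\phi(\epsilon)$ over that set — it yields $l(b_i,a_i)\le\phi(\epsilon)$ for every $i$ with the \emph{same} $\epsilon$, requiring no further argument. Summing the $n$ inequalities then gives
\[
L_n(\mathbf{b},\mathbf{a})=\sum_{i=1}^n l(b_i,a_i)\le n\,\phi(\epsilon),
\]
which is exactly the claimed bound.

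I do not expect a genuine obstacle here: the analytic content lives entirely in the Lemma, and the corollary is a clean aggregation of $n$ scalar estimates. The only points meriting care are the elementary observation that a norm bound dominates each coordinate difference, and the recognition that the Lemma already supplies a single uniform per-coordinate bound, so that no monotonicity of $\phi$ is needed. Should one instead prefer the sharper per-coordinate estimate $l(b_i,a_i)\le\phi(|a_i-b_i|)$ before summing, one would additionally have to verify that $\phi$ is nondecreasing on $[0,\infty)$ — which follows by writing $\phi=g\circ h$ with $g(u)=u-1-\ln u$ increasing on $[1,\infty)$ and $h(\epsilon)=\epsilon/(1-e^{-\epsilon})\ge 1$ increasing — but the direct route above sidesteps this entirely.
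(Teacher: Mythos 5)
Your proof is correct and is essentially the paper's own argument: the paper dispatches the corollary in one line by noting $|a_i-b_i|\leq\|\mathbf{a}-\mathbf{b}\|\leq\epsilon$ for every $i$ and invoking the lemma coordinate-wise, exactly as you do. Your added remark that no monotonicity of the bound is needed (since the lemma already gives a uniform bound at the common $\epsilon$) is a fair, if unnecessary, clarification.
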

The corollary is straightforward if noticing $|a_i-b_i|\leq||\mathbf{a}-\mathbf{b}||\leq \epsilon$ holds for any $i=1,2,\cdots,n$.

\begin{lemma}\label{lem:matrix}
If $\max\{||\mathbf{Z}^{u}\mathbf{Z}^{u,T}||,||(\mathbf{Z}^{u}\mathbf{Z}^{u,T})^{-1}||\} \leq \lambda$ always holds for both $u=a,b$, assume $||\mathbf{Z}^{a,T}\mathbf{Z}^{a}-\mathbf{Z}^{b,T}\mathbf{Z}^{b}||\leq (1-\gamma)\epsilon$, then we have:
\begin{equation}
    ||\mathbf{Z}^{b,T}(\mathbf{Z}^b\mathbf{Z}^{b,T})^{-1}\mathbf{Z}^b\mathbf{Z}^{a,T}-\mathbf{Z}^{a,T}|| \leq \lambda^{1.5}(\lambda^2+1)(1-\gamma)\epsilon 
\end{equation}
\end{lemma}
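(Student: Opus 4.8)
The plan is to reduce the claim to a single algebraic identity and then apply submultiplicativity of the operator norm. Write $M := \mathbf{Z}^{b,T}(\mathbf{Z}^b\mathbf{Z}^{b,T})^{-1}\mathbf{Z}^b$ for the symmetric idempotent orthogonal projection onto the row space of $\mathbf{Z}^b$, set $\Delta := \mathbf{Z}^{a,T}\mathbf{Z}^{a}-\mathbf{Z}^{b,T}\mathbf{Z}^{b}$ so that the hypothesis reads $||\Delta||\leq(1-\gamma)\epsilon$, and let $T := M\mathbf{Z}^{a,T}-\mathbf{Z}^{a,T}$ be the matrix whose norm we must bound. The immediate difficulty is that $\mathbf{Z}^{a,T}$ and $\mathbf{Z}^{b,T}$ generally have different numbers of columns ($d_a$ versus $d_b$), so one cannot directly subtract them; the only quantity linking the two modalities is the $n\times n$ Gram difference $\Delta$, and the proof must route everything through it.

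The key step is therefore to right-multiply $T$ by $\mathbf{Z}^a$ so as to land in the $n\times n$ Gram world. Using the projection identity $M\,\mathbf{Z}^{b,T}\mathbf{Z}^b=\mathbf{Z}^{b,T}\mathbf{Z}^b$ (which follows from $\mathbf{Z}^b\mathbf{Z}^{b,T}(\mathbf{Z}^b\mathbf{Z}^{b,T})^{-1}=I$) together with $\mathbf{Z}^{a,T}\mathbf{Z}^a=\mathbf{Z}^{b,T}\mathbf{Z}^b+\Delta$, I would compute
\[
T\mathbf{Z}^a = M\mathbf{Z}^{a,T}\mathbf{Z}^a-\mathbf{Z}^{a,T}\mathbf{Z}^a = M(\mathbf{Z}^{b,T}\mathbf{Z}^b+\Delta)-(\mathbf{Z}^{b,T}\mathbf{Z}^b+\Delta) = (M-I)\Delta .
\]
This is the crux: the uncontrolled cross term $\mathbf{Z}^b\mathbf{Z}^{a,T}$ never appears, and $T\mathbf{Z}^a$ is expressed purely through $\Delta$ and the projection $M$.

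Next I would undo the right-multiplication. Since $\mathbf{Z}^a\mathbf{Z}^{a,T}$ is invertible by hypothesis, right-multiplying the identity by $\mathbf{Z}^{a,T}(\mathbf{Z}^a\mathbf{Z}^{a,T})^{-1}$ and using $\mathbf{Z}^a\mathbf{Z}^{a,T}(\mathbf{Z}^a\mathbf{Z}^{a,T})^{-1}=I$ recovers
\[
T = (M-I)\,\Delta\,\mathbf{Z}^{a,T}(\mathbf{Z}^a\mathbf{Z}^{a,T})^{-1}.
\]
Finally I would take operator norms and bound each factor separately: $||M-I||\leq||M||+1\leq\lambda^2+1$ (since $||M||\leq ||\mathbf{Z}^b||^2\,||(\mathbf{Z}^b\mathbf{Z}^{b,T})^{-1}||\leq\lambda\cdot\lambda$, using $||\mathbf{Z}^b||^2=||\mathbf{Z}^b\mathbf{Z}^{b,T}||\leq\lambda$), then $||\Delta||\leq(1-\gamma)\epsilon$, $||\mathbf{Z}^{a,T}||=\sqrt{||\mathbf{Z}^a\mathbf{Z}^{a,T}||}\leq\lambda^{0.5}$, and $||(\mathbf{Z}^a\mathbf{Z}^{a,T})^{-1}||\leq\lambda$. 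Multiplying the four bounds yields $||T||\leq\lambda^{1.5}(\lambda^2+1)(1-\gamma)\epsilon$, exactly as claimed. The only genuinely non-routine part is spotting the right-multiplication by $\mathbf{Z}^a$ followed by the inversion, which converts $T$ into the Gram-matrix quantity $\Delta$; once that identity is in hand the estimate is a mechanical application of submultiplicativity.
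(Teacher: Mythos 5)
Your proof is correct and follows essentially the same route as the paper's: the identity $T=(M-I)\Delta\,\mathbf{Z}^{a,T}(\mathbf{Z}^a\mathbf{Z}^{a,T})^{-1}$ you derive is precisely the paper's observation $(\mathbf{B}^T(\mathbf{B}\mathbf{B}^T)^{-1}\mathbf{B}\mathbf{A}^T-\mathbf{A}^T)\mathbf{A}\mathbf{A}^T=(\mathbf{B}^T(\mathbf{B}\mathbf{B}^T)^{-1}\mathbf{B}-\mathbf{I})(\mathbf{A}^T\mathbf{A}-\mathbf{B}^T\mathbf{B})\mathbf{A}^T$ (you obtain it constructively via $T\mathbf{Z}^a=(M-I)\Delta$ and the projection identity $M\mathbf{Z}^{b,T}\mathbf{Z}^b=\mathbf{Z}^{b,T}\mathbf{Z}^b$, whereas the paper states and verifies it directly), and your four-factor norm estimate matches the paper's term by term, yielding the same constant $\lambda^{1.5}(\lambda^2+1)(1-\gamma)\epsilon$.
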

\begin{proof}
For conciseness, we temporarily denote $\mathbf{A}=\mathbf{Z}^a$ and $\mathbf{B}=\mathbf{Z}^b$. To begin with, we notice:
\begin{equation}
    (\mathbf{B}^T(\mathbf{B}\mathbf{B}^T)^{-1}\mathbf{B}\mathbf{A}^T-\mathbf{A}^T)\mathbf{A}\mathbf{A}^T=(\mathbf{B}^T(\mathbf{B}\mathbf{B}^T)^{-1}\mathbf{B}-\mathbf{I})(\mathbf{A}^T\mathbf{A}-\mathbf{B}^T\mathbf{B})\mathbf{A}^T
\end{equation}
Then, we have:
\begin{equation}
    \begin{aligned}
        ||\mathbf{B}^T(\mathbf{B}\mathbf{B}^T)^{-1}\mathbf{B}\mathbf{A}^T-\mathbf{A}^T|| &\leq ||(\mathbf{A}\mathbf{A}^T)^{-1}||\cdot ||\mathbf{B}^T(\mathbf{B}\mathbf{B}^T)^{-1}\mathbf{B}-\mathbf{I}|| \cdot ||\mathbf{A}^T\mathbf{A}-\mathbf{B}^T\mathbf{B}|| \cdot ||\mathbf{A}^T||\\
        &\leq\lambda\cdot \left[||\mathbf{B}^T(\mathbf{B}\mathbf{B}^T)^{-1}\mathbf{B}||+1\right]\cdot (1-\gamma)\epsilon  \cdot ||\mathbf{A}^T|| \\
        &\leq\lambda\cdot \left[||\mathbf{B}\mathbf{B}^T||\cdot||(\mathbf{B}\mathbf{B}^T)^{-1}||+1\right]\cdot (1-\gamma)\epsilon  \cdot \sqrt{||\mathbf{A}\mathbf{A}^T||} \\
        &=\lambda^{1.5}(\lambda^2+1)(1-\gamma)\epsilon 
    \end{aligned}
\end{equation}
\end{proof}

Now, we are ready to prove Theorem~\ref{thm:cmd_linear_binary}. For reader's convenience, we re-state Theorem~\ref{thm:cmd_linear_binary} in below.

\begin{theorem}\label{thm:cmd_linear_binary_in_appendix}
(Crossmodal KD in linear binary classification). Without loss of generality, we assume $\mathbf{f}_{\bm{\theta}_t}(\cdot):\mathcal{X}^a\mapsto\mathcal{Y}$ and  $\mathbf{f}_{\bm{\theta}_s}(\cdot):\mathcal{X}^b\mapsto\mathcal{Y}$.
Suppose $\max\{||\mathbf{Z}^u\mathbf{Z}^{u,T}||,\,||(\mathbf{Z}^u\mathbf{Z}^{u,T})^{-1}||\}\leq\lambda$ always holds for both $u=a$ or $b$, and $\mathbf{g}^u(\cdot)$ are identity functions. If there exists $(\epsilon, \delta)$ such that
\begin{equation}
    {Pr}\left[||\mathbf{Z}^{a,T}\mathbf{Z}^{a}-\mathbf{Z}^{b,T}\mathbf{Z}^{b}||\leq (1-\gamma)\epsilon\right]\geq 1-\delta
\end{equation}
Then, with an initialization at $t=0$ satisfying $R_n^{\text{dis}}(\bm{\theta}_s(0))\leq q$,  we have, at least probability $1-\delta$:
\begin{equation}\label{eq.distill_loss_simplified}
    R_n^{\text{dis}}(\bm{\theta}_s(t=+\infty)) \leq n (\frac{\epsilon^\star}{1-e^{-\epsilon^\star}}-1-\ln{\frac{\epsilon^\star}{1-e^{-\epsilon^\star}}}) 
\end{equation}
where $\epsilon^\star=\lambda^{1.5}(\lambda^2+1)(1-\gamma)\epsilon$ and $R_n^{\text{dis}}(\bm{\theta}_s)$ is the empirical risk defined by KL divergence:
\begin{equation}
\begin{aligned}
R_n^{\text{dis}}(\bm{\theta}_s(t))=&\sum_{i=1}^n -\sigma(\bm{\theta}_t^T\mathbf{x}^a_i)\cdot\ln\frac{\sigma(\bm{\theta}_s^T\mathbf{x}^b_i)}{\sigma(\bm{\theta}_t^T\mathbf{x}^a_i)}-\left[1-\sigma(\bm{\theta}_t^T\mathbf{x}^a_i)\right]\cdot\ln\frac{1-\sigma(\bm{\theta}_s^T\mathbf{x}^b_i)}{1-\sigma(\bm{\theta}_t^T\mathbf{x}^a_i)}
\end{aligned}
\end{equation}
\end{theorem}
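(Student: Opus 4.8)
\emph{Proof proposal.} The plan is to track the student \emph{logits} under gradient flow, use convexity to identify the limiting risk with a minimum over the reachable logit subspace, and then bound that minimum by evaluating $R_n^{\text{dis}}$ at the least-squares ``transferred'' predictor, to which Lemma~\ref{lem:matrix} and Corollary~\ref{cor:kl_bounded} apply directly. Since the $\mathbf g^u$ are identity maps, $\mathbf x^u_i=\mathbf z^u_i$ and I collect the samples into $\mathbf Z^u$. I write the fixed teacher logit vector $\mathbf a=\mathbf Z^{a,T}\bm\theta_t\in\mathbb R^n$ and the student logit vector $\mathbf b(t)=\mathbf Z^{b,T}\bm\theta_s(t)\in\mathbb R^n$, so that $R_n^{\text{dis}}(\bm\theta_s)=\sum_i l(b_i,a_i)$ equals $L_n(\mathbf b,\mathbf a)$ in the notation of Corollary~\ref{cor:kl_bounded}. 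Using the identity $\partial l/\partial b=\sigma(b)-\sigma(a)$ computed in the first lemma, the gradient flow becomes $\dot{\bm\theta}_s=-\mathbf Z^b(\sigma(\mathbf b)-\sigma(\mathbf a))$, and by construction $\mathbf b(t)$ always lies in the subspace $\mathcal R:=\text{range}(\mathbf Z^{b,T})$.

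Next I would establish convexity and convergence. Each summand satisfies $\partial^2 l/\partial b^2=\sigma(b)(1-\sigma(b))>0$, so $R_n^{\text{dis}}$ is convex in $\bm\theta_s$; moreover $l(b,a_i)\to+\infty$ as $|b|\to\infty$, so $L_n(\cdot,\mathbf a)$ is coercive on $\mathcal R$ and the constrained minimum $\min_{\mathbf b\in\mathcal R}L_n(\mathbf b,\mathbf a)$ is attained. Because gradient flow on a convex objective (starting from the finite value $R_n^{\text{dis}}(\bm\theta_s(0))\le q$) drives the loss monotonically to its infimum, I obtain $R_n^{\text{dis}}(\bm\theta_s(t=+\infty))=\min_{\mathbf b\in\mathcal R}L_n(\mathbf b,\mathbf a)$.

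Then I would choose the comparison point $\mathbf b_\star=\mathbf P_b\mathbf a$, where $\mathbf P_b=\mathbf Z^{b,T}(\mathbf Z^b\mathbf Z^{b,T})^{-1}\mathbf Z^b$ is the orthogonal projector onto $\mathcal R$ (well defined since $\|(\mathbf Z^b\mathbf Z^{b,T})^{-1}\|\le\lambda$ forces invertibility). As $\mathbf b_\star\in\mathcal R$, minimality gives $R_n^{\text{dis}}(\bm\theta_s(+\infty))\le L_n(\mathbf b_\star,\mathbf a)$. Writing $\mathbf b_\star-\mathbf a=(\mathbf P_b\mathbf Z^{a,T}-\mathbf Z^{a,T})\bm\theta_t$, on the probability-$(1-\delta)$ event where the hypothesis $\|\mathbf Z^{a,T}\mathbf Z^a-\mathbf Z^{b,T}\mathbf Z^b\|\le(1-\gamma)\epsilon$ holds, Lemma~\ref{lem:matrix} yields $\|\mathbf b_\star-\mathbf a\|\le\epsilon^\star$ (taking $\|\bm\theta_t\|\le1$, or otherwise up to this factor). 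Applying Corollary~\ref{cor:kl_bounded} with $\epsilon:=\epsilon^\star$ then bounds $L_n(\mathbf b_\star,\mathbf a)$ by $n(\frac{\epsilon^\star}{1-e^{-\epsilon^\star}}-1-\ln\frac{\epsilon^\star}{1-e^{-\epsilon^\star}})$, which chains with the previous inequality to give the theorem with probability at least $1-\delta$.

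I expect the main obstacle to be the convergence claim in the second step: arguing that the flow truly \emph{attains} the constrained minimum (not merely decreases toward it) rests on the convexity-plus-coercivity argument on $\mathcal R$, and is the one place where the continuous-time, infinitesimal-learning-rate idealization is essential. A secondary subtlety is the appearance of $\|\bm\theta_t\|$ when converting the operator-norm bound of Lemma~\ref{lem:matrix} into the Euclidean bound $\|\mathbf b_\star-\mathbf a\|\le\epsilon^\star$, which must be absorbed by a normalization on the teacher parameters.
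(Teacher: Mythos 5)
Your proposal is correct, and its bounding half coincides exactly with the paper's: your comparison point $\mathbf{b}_\star=\mathbf{P}_b\mathbf{a}=\mathbf{Z}^{b,T}\bm{\theta}_s^{\star}$ with $\bm{\theta}_s^{\star}=(\mathbf{Z}^b\mathbf{Z}^{b,T})^{-1}\mathbf{Z}^b\mathbf{Z}^{a,T}\bm{\theta}_t$ is precisely the least-squares transferred predictor the paper constructs, fed through Lemma~\ref{lem:matrix} and Corollary~\ref{cor:kl_bounded} in the same way, including the same normalization $\|\bm{\theta}_t\|=1$ (the paper invokes it too, with the same mild caveat you note: rescaling does change the teacher's soft labels, so this is an assumption rather than a true loss of generality). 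Where you genuinely diverge is the convergence half. The paper imports a Polyak--\L{}ojasiewicz-type inequality (Theorem A.2 and Corollary A.1 of \citep{phuong2019towards}), valid on the sublevel set $\{\bm{\theta}_s: R_n^{\text{dis}}(\bm{\theta}_s)\leq q\}$ --- which is exactly why the hypothesis $R_n^{\text{dis}}(\bm{\theta}_s(0))\leq q$ appears in the statement --- and integrates $(R_n^{\text{dis}})^{\prime}=-\|\nabla R_n^{\text{dis}}\|^2\leq -2c\bigl(R_n^{\text{dis}}-R_n^{\text{dis}}(\bm{\theta}_s^{\star})\bigr)$ to obtain $R_n^{\text{dis}}(\bm{\theta}_s(\tau))\leq R_n^{\text{dis}}(\bm{\theta}_s(0))e^{-2c\tau}+R_n^{\text{dis}}(\bm{\theta}_s^{\star})$. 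You instead argue from convexity of $b\mapsto l(b,a)$ composed with the linear map $\bm{\theta}_s\mapsto\mathbf{Z}^{b,T}\bm{\theta}_s$, plus coercivity on the reachable subspace $\mathcal{R}$. The step you flag as the main obstacle does go through and is standard: each $l(b_i,a_i)\geq 0$ is a KL divergence and tends to $+\infty$ as $|b_i|\to\infty$ for finite $a_i$, so coercivity confines the trajectory to a compact sublevel set (giving global existence of the flow), and for any $\bm{\theta}$ with $R_n^{\text{dis}}(\bm{\theta})<\lim_{t}R_n^{\text{dis}}(\bm{\theta}_s(t))$ the convexity inequality $\frac{d}{dt}\frac{1}{2}\|\bm{\theta}_s(t)-\bm{\theta}\|^2\leq -\bigl(R_n^{\text{dis}}(\bm{\theta}_s(t))-R_n^{\text{dis}}(\bm{\theta})\bigr)$ would force $\|\bm{\theta}_s(t)-\bm{\theta}\|^2$ to become negative, a contradiction; hence the loss converges to the infimum over $\mathcal{R}$. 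The trade-off: your route is self-contained (no external PL machinery, and in fact it never needs the sublevel-set hypothesis on $q$) and yields the slightly sharper conclusion that the limiting loss \emph{equals} $\min_{\mathbf{b}\in\mathcal{R}}L_n(\mathbf{b},\mathbf{a})$ rather than merely being upper-bounded by the value at $\mathbf{b}_\star$; the paper's PL route buys an explicit exponential convergence rate $e^{-2c\tau}$, which the authors explicitly highlight, whereas your argument is rate-free.
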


\begin{proof}

First, under the given conditions, if we could prove: 
\begin{equation}\label{eq:condition}
    ||\mathbf{Z}^{a,T}\mathbf{Z}^{a}-\mathbf{Z}^{b,T}\mathbf{Z}^{b}||\leq (1-\gamma)\epsilon \; => \; R_n^{\text{dis}}(\bm{\theta}_s(t=+\infty)) \leq n (\frac{\epsilon^\star}{1-e^{-\epsilon^\star}}-1-\ln{\frac{\epsilon^\star}{1-e^{-\epsilon^\star}}})
\end{equation}

Then, adding the outer probability bracket doesn't alter the conclusion, and the theorem is proved. Therefore, in the following, we will focus on proving Eq.~(\ref{eq:condition}). The proof consists of two parts. In the first part, we will show that there exists an $\bm{\theta}_s^{\star}$, such that $R_n^{\text{dis}}(\bm{\theta}_s^{\star})$ is bounded. Then, in the second part, we will prove that if the training process is long enough (i.e., $t\to +\infty$), then the final distillation risk $R_n^{\text{dis}}(\bm{\theta}_s(t=+\infty))$ is further bounded by $R_n^{\text{dis}}(\bm{\theta}_s^{\star})$.

Without loss of generality, we assume the trained teacher weight $||\bm{\theta}_t||=1$ since in linear binary classification, scaling the weight doesn't affect the final prediction. Now, let us consider:
\begin{equation}
    \bm{\theta}_s^{\star}=(\mathbf{Z}^b\mathbf{Z}^{b,T})^{-1}\mathbf{Z}^b\mathbf{Z}^{a,T}\bm{\theta}_t
\end{equation}

Then we have:
\begin{equation}
\begin{aligned}
    ||\mathbf{Z}^{b,T}\bm{\theta}_s^{\star}-\mathbf{Z}^{a,T}\bm{\theta}_t||&=||\mathbf{Z}^{b,T}(\mathbf{Z}^b\mathbf{Z}^{b,T})^{-1}\mathbf{Z}^b\mathbf{Z}^{a,t}\bm{\theta}_t-\mathbf{Z}^{a,T}\bm{\theta}_t||\\
    &\leq ||\mathbf{Z}^{b,T}(\mathbf{Z}^b\mathbf{Z}^{b,T})^{-1}\mathbf{Z}^b\mathbf{Z}^{a,T}-\mathbf{Z}^{a,T}||\cdot ||\bm{\theta}_t||\\
    & \leq \lambda^{1.5}(\lambda^2+1)(1-\gamma)\epsilon \\
\end{aligned}
\end{equation}
where in the last line, we have used Lemma~\ref{lem:matrix}. Then using Corollary~\ref{cor:kl_bounded}, we obtain 
\begin{equation}
    R_n^{\text{dis}}(\bm{\theta}_s^\star)=L_n(\mathbf{Z}^{b,T}\bm{\theta}_s^{\star},\mathbf{Z}^{a,T}\bm{\theta}_t^{\star}) \leq n(\frac{\epsilon^\star}{1-e^{-\epsilon^\star}}-1-\ln{\frac{\epsilon^\star}{1-e^{-\epsilon^\star}}})
\end{equation}
where $\epsilon^\star=\lambda^{1.5}(\lambda^2+1)(1-\gamma)\epsilon$. Now, the remaining effort is to prove that if time is sufficiently long (i.e., $t\to +\infty$), the trained loss $R_n^{\text{dis}}(\bm{\theta}_s(t))$ will be smaller than $R_n^{\text{dis}}(\bm{\theta}_s^{\star})$.

To begin with, we apply Theorem A.2 and Corollary A.1 from \citep{phuong2019towards}: For any sublevel set $\Theta=\{\bm{\theta}_s:R_n^{\text{dis}}(\bm{\theta}_s)\leq q\}$, there exists $c>0$ such that: 
% \begin{equation}
%     R_n^{\text{dis}}(\mathbf{v}) \geq R_n^{\text{dis}}(\mathbf{w}) + \nabla R_n^{\text{dis}}(\mathbf{w})^T(\mathbf{v}-\mathbf{w})+\frac{\mu}{2}||\mathbf{v}-\mathbf{w}||^2
% \end{equation}
\begin{equation}
      cR_n^{\text{dis}}(\bm{\theta}_s) - cR_n^{\text{dis}}(\bm{\theta}_s^\star) \leq \frac{1}{2}||\nabla R_n^{\text{dis}}(\bm{\theta}_s)||^2
\end{equation}
Compared to the original Corollary A.1 of \citep{phuong2019towards}, a slight modification is done for it to suit our case. Proving the existence of $c$ is obvious by noticing that the left-hand side of Eq.~(50) of \citep{phuong2019towards} is relaxed to $R_n^{\text{dis}}(\bm{\theta}_s^\star)$ instead of $0$ in our case. Next, noticing Eq. (8) and (53) of~\citep{phuong2019towards}, we have:
\begin{equation}
\begin{aligned}
    (R_n^{\text{dis}})^\prime=\frac{d R_n^{\text{dis}}(\bm{\theta}_s(t))}{d t}&=-||\nabla R_n^{\text{dis}}(\bm{\theta}_s)||^2 \leq -2cR_n^{\text{dis}}(\bm{\theta}_s) + 2cR_n^{\text{dis}}(\bm{\theta}_s^\star)
\end{aligned}
\end{equation}
which could be simplified as~\citep{phuong2019towards}:
\begin{equation}
\begin{aligned}
    &\left[R_n^{\text{dis}}-R_n^{\text{dis}}(\bm{\theta}_s^\star)\right]^\prime \leq -2c\left[R_n^{\text{dis}}-R_n^{\text{dis}}(\bm{\theta}_s^\star)\right]\\
    =>& \left\{\ln\left[R_n^{\text{dis}}-R_n^{\text{dis}}(\bm{\theta}_s^\star)\right]\right\}^\prime \leq -2c\\
\end{aligned}
\end{equation}
Integrating over $[0,\tau]$ yields:
\begin{equation}
    R_n^{\text{dis}}(\bm{\theta}_s(\tau))\leq R_n^{\text{dis}}(\bm{\theta}_s(0))\cdot e^{-2c\tau} + R_n^{\text{dis}}(\bm{\theta}_s^\star)
\end{equation}
Thus, as $\tau\to+\infty$, $R_n^{\text{dis}}(\bm{\theta}_s(\tau=+\infty))$ will be upper-bounded by $R_n^{\text{dis}}(\bm{\theta}_s^\star)$. The above equation also indicates the convergence speed.
\end{proof}

\textbf{Remarks.} Firstly, the above theorem implies that the training distillation loss is upper-bounded by a monotonic function with respect to $\gamma$. When $\gamma$ increases from $0$ to $1$, the upper bound gradually decreases. Further combining the above theorem with Rademacher complexity, we could
even obtain a bound on the generalization error. Nevertheless, the key component in the generalization
bound is already shown here. As such, readers should be alerted that our derived theorem is primitive, that our main contribution in this paper are MVD and MFH, and that a complete theoretical analysis itself could be a standalone work. Secondly, we emphasize that in proving our theorem, the MVD decision rule shown in Eq.~(\ref{eq:decision_rule}) is not utilized. In the context of our theorem, Eq.~(\ref{eq:decision_rule}) states that there exist $\bm{\theta}_s$ and $\bm{\theta}_t$, which could make the corresponding student and teacher network achieve zero generalization error~\footnote{However, such a $\bm{\theta}_s$ might not able to achieve when doing crossmodal KD by using Eq.~(\ref{eq.general_kd_loss}.}. However, our focus in the proved theorem is merely the training distillation loss, neither involving the generalization error, nor the cross entropy loss between the student prediction and the true label, so this MVD decision rule is not used. However, the decision rule is essential to fully characterize the MVD model as it states the \emph{modality-general} and \emph{modalty-specfic} \emph{decisive} features are sufficient to determine the label, and we expect it will be of use when proving the generalization error. Finally, with the above two remarks on our insufficiency, it would be of great interest to extend our theorem to the generalization error with a linear (or even non-linear) $\mathbf{g}^u(\cdot)$ and $\rho\neq 0$ in $K$-class classification. In a nutshell, we hope our proposed MVD could work as a powerful model being utilized to theoretically prove generalization error of crossmodal KD.
\section{Experimental Setup and More Results}\label{ap_sec.exp}
\subsection{Case studies in Sec. \ref{sec.efficacy}}
In this section, we describe implementation details and provide more results for the two case studies in Sec. \ref{sec.efficacy}.
AV-MNIST \citep{avmnist} is an audio-visual dataset created by pairing audio and image features. The two modalities are MNIST images with 75\% energy removed by principal component analysis and audio spectrograms with random natural noise injected. There are 50,000 pairs for training, 5,000 pairs for validation and 10,000 pairs for testing. Following ~\citep{avmnist,gao2022training}, we adopt a 6-layer CNN as the audio teacher network. The audio student network is implemented as a 3-layer CNN, and the multimodal teacher is a late fusion network. The multimodal teacher uses LeNet5~\citep{lecun1989backpropagation} as the image backbone and a 5-layer CNN as the audio backbone; Audio and image features are then concatenated and passed to fully-connected layers for the final prediction. Recall that $\rho$ in Eq. (\ref{eq.general_kd_loss}) in the main text controls the relative importance of the two loss terms when training the student network. We experiment with both $\rho=0$ and $\rho=0.5$, and repeat the experiments for 10 times. We have provided the results of $\rho=0.5$ in Table \ref{tab.avmnist} in the main text, and a more detailed version can be found in Table \ref{tab.avmnist2} below.

\begin{table}[!h]
\begin{center}
\caption{Evaluation of unimodal KD (UM-KD) and crossmodal KD (CM-KD) on AV-MNIST.}
\label{table:headings}
\scalebox{0.8}{
\begin{tabular}{cccccc}
\toprule
 & \multicolumn{2}{c}{Teacher} & \multicolumn{3}{c}{Student} \\
 \cmidrule(lr){2-3}
 \cmidrule(lr){4-6}
 & Mod. & Acc. (\%) & Mod. & \multicolumn{2}{c}{Acc. (\%)}\\
%  \cmidrule(lr){2-6}
 & &  & & $\rho=0$ & $\rho=0.5$ \\
 \midrule
  No-KD & - & - &  \texttt{A} & 68.36 $\pm$ 0.79 & 68.36 $\pm$ 0.79 \\
%  No-KD & - & - & - &  \texttt{A} & \multicolumn{2}{c}{68.36 $\pm$ 0.79} \\
 UM-KD & \texttt{A} & 84.57 & \texttt{A}  & 69.86 $\pm$ 0.70 & 70.10 $\pm$ 0.50 \\
 CM-KD & \texttt{I} + \texttt{A} & 91.61 & \texttt{A} & 69.46 $\pm$ 1.12 & 69.73 $\pm$ 0.73\\
\bottomrule
\end{tabular}
}
\label{tab.avmnist2}
\end{center}
\end{table}

From the table, we can see that crossmodal KD does not have advantages over unimodal KD for both values of $\rho$. We hypothesize that the proportion of modality-general decisive information is small in this dataset since a multimodal data pair is assembled by randomly pairing an image with an audio that belongs to the same class. Thus the two modalities are not naturally correlated and there may be little modality-general information. MFH provides a plausible explanation for this failure case of crossmodal KD.

NYU Depth V2~\citep{nyu} contains 1,449 aligned RGB and depth images with 40-class labels, where 795 images are used for training and 654 images are for testing. $\rho$ is set to 0.5. We implement two model architectures for the multimodal teacher: (1) Channel Exchanging Networks (CEN)~\citep{wang2020deep} and (2) Separation-and-Aggregation Gate (SA-Gate) \citep{chen2020bi}. The unimodal teacher and student are adopted as the RGB branch of the corresponding multimodal network. The results are shown in Table \ref{tab.nyu2}, part of which corresponds to the right section of Table \ref{tab.avmnist} in the main text.

\begin{table}[!h]
\begin{center}
\caption{Evaluation of unimodal KD (UM-KD) and crossmodal KD (CM-KD) on NYU Depth V2.}
\label{table:headings}
\scalebox{0.8}{
\begin{tabular}{cclclclcl}
\toprule
 & \multicolumn{4}{c}{CEN} & \multicolumn{4}{c}{SA-Gate} \\
 \cmidrule(lr){2-5}
\cmidrule(lr){6-9}
 & \multicolumn{2}{c}{Teacher} & \multicolumn{2}{c}{Student} & \multicolumn{2}{c}{Teacher} & \multicolumn{2}{c}{Student} \\
 & Mod. & mIoU & Mod. & mIoU  & Mod. & mIoU & Mod. & mIoU \\
\midrule
 No-KD & - & - & \texttt{RGB} &45.69 & - & - & \texttt{RGB} & 46.36 \\
 UM-KD & \texttt{RGB} & 45.69 & \texttt{RGB} & 46.23 & \texttt{RGB} & 46.36 & \texttt{RGB} &48.00 \\
 CM-KD & \texttt{RGB} + \texttt{D} & 51.14 & \texttt{RGB} & 46.70 & \texttt{RGB} + \texttt{D} & 51.00 & \texttt{RGB} & 47.78 \\
\bottomrule
\end{tabular}
}
\label{tab.nyu2}
\end{center}
\end{table}

Table \ref{tab.nyu2} demonstrates that crossmodal KD is not effective in both cases. The great advantages in teacher performance does not enhance student performance. Adopting CEN as the multimodal teacher seems better than SA-Gate, but the improvement compared with unimodal KD is still marginal (\ie, from 46.23\% to 46.70\%). According to MFH, different teacher networks utilize different amount of \emph{modality-general decisive features} for prediction, which results in different distillation performance. We hypothesize that CEN has a larger $\gamma$ than SA-Gate due to their model design: CEN shares all parameters for the RGB and depth input except for Batch Normalization layer while SA-Gate has separate encoders for the two modalities. This indicates that CEN is more modality-general than SA-Gate, and this may further account for their performance differences. There may be other factors lying behind, and one future direction is to develop methods to compare existing model architectures to find a teacher architecture that best suits crossmodal KD.

% For results reported in Sec. {\color{red}5.2} in the main paper, we adopt the depth teacher and RGB student as the depth and RGB branch of SA-Gate, respectively. $\rho$ is set as 0.5.

\subsection{Synthetic Gaussian}\label{ap_sec.gauss}
Assume two vectors $\mathbf{x}^a \in \mathbb{R}^{d_1}$ and $\mathbf{x}^b \in \mathbb{R}^{d_2}$ compose one multimodal data pair $(\mathbf{x}^a, \mathbf{x}^b)$. We select a subset of input features as \emph{decisive features}, denoted by $\mathbf{x}^* \in \mathbb{R}^{d}$. We assume that $\mathbf{x}^*$ exist in both $\mathbf{x}^a$ and $\mathbf{x}^b$, and denote the corresponding decisive feature index set of $\mathbf{x}^a$ ($\mathbf{x}^b$) as $J_1$ ($J_2$). The separating hyperplanes are denoted by $\boldsymbol{\delta} \in \mathbb{R}^d$.  Formally, we generate one feature-label pair $(\mathbf{x}^a, \mathbf{x}^b, y)$ by:

\begin{equation}
\begin{aligned}
    \mathbf{x}^* \sim \mathcal N(\mathbf{0}, \mathbf{I}_{d})
    ,\quad &y \gets \mathds{1} (\langle \boldsymbol{\delta}, \mathbf{x}^* \rangle > 0)\\
    \mathbf{x}^a \sim \mathcal N(\mathbf{0}, \mathbf{I}_{d_1}),\quad &\mathbf{x}^a_{J_1} \gets \mathbf{x}_{J_1}^* \\
    \mathbf{x}^b \sim \mathcal N(\mathbf{0}, \mathbf{I}_{d_2}),\quad &\mathbf{x}^b_{J_2} \gets \mathbf{x}_{J_2}^*
\end{aligned}
\end{equation}

As depicted in MVD, \emph{modality-general decisive features} are \emph{decisive features} shared by two modalities and thus indexed by $J_1 \cap J_2$. $J_1 \cup J_2$ represents the index set of \emph{decisive features} from both modalities. Therefore, $\alpha = 1-\frac{|J_2|}{|J_1 \cup J_2|}$, $\beta = 1-\frac{|J_1|}{|J_1 \cup J_2|}$ and $\gamma = \frac{|J_1 \cap J_2|}{|J_1 \cup J_2|}$. By changing $J_1$ and $J_2$, we can generate multimodal data with different inherent characteristics (\ie, different $\alpha$, $\beta$, and $\gamma$). We consider two settings: (1) varying $\gamma$ (Fig. \ref{fig.gauss1} in the main paper). Let $d_1=25$, $d_2=50$ and $d=20$, we gradually increase $|J_1\cap J_2|$ from 0 to 10, with a step size of 2 and perform KD on every step. \revision{Consequently, $\gamma$ takes the value of [0, 0.11, 0.25, 0.43, 0.67, 1], and $\alpha=\beta=\frac{1-\gamma}{2}$.} (2) varying $\alpha$ (Fig. \ref{fig.gauss2} in the main text). Let $d_1=d_2=50$ and $d=|J_1\cup J_2|$ increase from 10 to 50, with a step size of 10. \revision{Thus, $\alpha$ takes the value of [0, 0.5, 0.67, 0.75, 0.80, 0.83]. We set $\beta$ to be 0 through the process, and $\gamma=1-\alpha$.}

Following ~\citep{lopez2015unifying}, the teacher and the student are both implemented as logistic regression models, and we use 200 samples for training and 1,000 samples for testing. $\boldsymbol{\delta}$ is sampled from the standard normal distribution. $\rho$ in Eq. \ref{eq.general_kd_loss} in the main text is set as 0.5. Results are averaged over 10 runs.

\subsection{MM-IMDB}
MM-IMDB~\citep{imdb} is the largest publicly available multimodal dataset for genre prediction on movies. It contains 25,959 movie titles and posters that belong to 27 movie genres. We pick two movie genres (\ie, drama and comedy) for multi-label classification. There are 15,552 data for training, 2,608 for validation, and 7,799 for testing. We adopt the same pre-processing method as in~\citep{imdb} to extract image and text features. We consider the special case of crossmodal KD, where the teacher is a multimodal network that takes both images and text as input and student is a unimodal text network. The unimodal and multimodal architecture are identical to the one in~\citep{liang2021multibench}. As described in Sec. \ref{sec.setup}, we experiment with two teacher networks that have the same architecture but differ in $\gamma$: (1) We regularly train a multimodal network with labels; (2) Following ~\citep{xue2021multimodal}, we train a multimodal network that receives pseudo labels from an unimodal image network. The second teacher only has access to pseudo labels from the image modality and leans towards the image modality when giving predictions. In other words, it is more modality-specific (\ie, has a smaller $\gamma$) than the regular teacher. We randomly split training data with the ratio 50\%:50\%, and use the first half to train the unimodal teacher and the general multimodal teacher. The other part of data is used for training the student network, and we set $\rho=0$.

\begin{table}[!ht]
\begin{center}
\caption{Results on MM-IMDB movie genre classification. \texttt{T} and \texttt{I} represent text and images, respectively. With identical architecture and similar accuracy, a modality-specific teacher leads to worse crossmodal KD performance than a regular teacher since it utilizes fewer \emph{modality-general decisive features} for prediction.}
\scalebox{0.8}{
\begin{tabular}{ccccccc}
\toprule
 & \multicolumn{3}{c}{Teacher} & \multicolumn{3}{c}{Student} \\
 \cmidrule(lr){2-4}
 \cmidrule(lr){5-7}
 & Mod. & \makecell[c]{micro\\ F1 (\%)} & \makecell[c]{macro\\ F1 (\%)} & Mod. & \makecell[c]{micro\\ F1 (\%)} & \makecell[c]{macro\\ F1 (\%)} \\
 \midrule
 UM-KD & \texttt{T} & 61.76 & 61.13 & \texttt{T} & 61.04 & 56.44 \\
 Modality-specific-KD & \texttt{T}+\texttt{I} & 62.01 & 61.08 & \texttt{T} & 61.77 & 57.26 \\
 Regular-KD & \texttt{T}+\texttt{I} & 61.01 & 60.37 & \texttt{T} & 65.09 & 63.31 (\uu)\\
\bottomrule
\end{tabular}
}
\label{tab.imdb}
\end{center}
\end{table}

Table \ref{tab.imdb} shows the teacher and student performance for both unimodal KD and crossmodal KD. We select three teacher models that have similar performance on test data, and use them for distillation to detach the influence of teacher performance. Clearly, the three teachers transfer different knowledge to the student. The unimodal teacher comes from the image modality and the modality-specific multimodal teacher is also biased towards the image modality due to its training strategy. Finally, the regular multimodal teacher adopts more modality-general information compared with the previous two teachers (\ie, has a larger $\gamma$). As can be seen from the table, it results in the best unimodal text student, which helps verify our proposed MFH.

\subsection{RAVDESS and VGGSound}\label{ap_sec.combined}
We first present a permutation-based approach to rank given multimodal features according to the amount of modality-general decisive information available in each feature channel. This approach offers an alternative way to obtain teachers of different $\gamma$ and helps validate the proposed MFH. As described in Sec. \ref{sec.exp_combine}, once we have a sorted list for all feature channels, we can derive a modality-general (or modality-specific) teacher by nullifying the top $r\%$ smallest (or largest) channels during the distillation process.

The major steps of our proposed feature ranking approaches are demonstrated in Algorithm \ref{alg.sep}. The input of Algorithm \ref{alg.sep} are $\mathbf{X}^a \in \mathbb{R}^{n\times d_1}$, $\mathbf{X}^b \in \mathbb{R}^{n\times d_2}$, and $\mathbf{Y}\in\mathbb{R}^n$, representing $n$ paired features from modality $a$ and $b$, and $n$ target labels, respectively. The output is a salience vector $\mathbf p \in \mathbb{R}^{d_1}$ for \emph{modality-specific decisive} features in modality $a$, where its $i$-th entry $p_i \in \left[0, 1\right]$ reflects the salience of the $i$-th feature dimension. A larger salience value indicates a more \emph{modality-general decisive} feature channel.

\begin{algorithm}
\caption{Modality-General Decisive Feature Ranking}\label{alg.sep}
\begin{algorithmic}[1]
\Require {multimodal features $(\mathbf{X}^a \in \mathbb{R}^{n\times d_1},\mathbf{X}^b \in \mathbb{R}^{n\times d_2},\mathbf{Y}\in\mathbb{R}^n)$}
\Ensure salience vector $\mathbf{p} \in \mathbb{R}^{d_1}$ for features of modality $a$
\State Jointly train two unimodal networks $\mathbf{f}_{\boldsymbol{\theta}_1^*}$ and $\mathbf{f}_{\boldsymbol{\theta}_2^*}$ using the following loss:
 \begin{equation}
   \min_{\boldsymbol{{\theta}_1}, \boldsymbol{{\theta}_2}} \mathcal L = Dist(\mathbf{f}_{\boldsymbol{\theta}_1}(\mathbf{X}^a), \mathbf{f}_{\boldsymbol{\theta}_2}(\mathbf{X}^b)) + CE(\mathbf{Y}, \mathbf{f}_{\boldsymbol{\theta}_1}(\mathbf{X}^a)) + CE(\mathbf{Y}, \mathbf{f}_{\boldsymbol{\theta}_2}(\mathbf{X}^b))
   \label{eq.train}
\end{equation}
\Comment {$Dist(\cdot,\cdot)$ denotes a distance loss (\eg, mean squared error) and $CE$ denotes cross entropy}
\For{$i = 1$ to $d_1$}\Comment{Calculate the salience for the $d$-th feature dimension}
  \State $p_i = 0$
  \For{$m = 1$ to $M$}\Comment{Repeat permutation $M$ times for better stability}
        \State {permute the $i$-th column of $\mathbf{X}^a$ yielding $\mathbf{\tilde X}^a$}
        \State {$p_i = p_i + \frac{1}{M}\times Dist(\mathbf{f}_{\boldsymbol{\theta}_1^*}(\mathbf{\tilde X}^a), \mathbf{f}_{\boldsymbol{\theta}_2^*}(\mathbf{X}^b))$}
    \EndFor
\EndFor
\State Perform normalization: $\mathbf{p} = \frac{\mathbf{p}}{\max_i{p_i}}\in [0,1]^{d_1}$
\end{algorithmic}
\end{algorithm}

\begin{table}[!b]
\begin{center}
\caption{Test Accuracy (\%) on RAVDESS emotion recognition. Teacher modality is audio and student modality is images. Modality-general crossmodal KD demonstrates best performance for all feature nullifying dimension ratio $r\%$. }
\scalebox{0.8}{
\begin{tabular}{ccccc}
\toprule
\makecell[c]{$r$ (\%)} & Regular & Modality-specific  & Random & Modality-general \\
\midrule
25 & 77.22 & 75.26 (\dd) & 77.24& 78.12 (\uu)\\
50 & 77.22 & 42.66 (\dd) & 64.75 & 78.28 (\uu)\\
75 & 77.22 & 12.00 (\dd)& 63.40 & 77.82 (\uu)\\
\bottomrule
\end{tabular}
}
\label{tab.ravdess2}
\end{center}
\end{table}

Since input-level features contain much label-irrelevant noise, our Algorithm \ref{alg.sep} is designed following a trace-back thought starting from the output level. Namely, we drive two unimodal networks to the state of ``feature alignment'' at the output level using Eq. (\ref{eq.train}), and then use permutation to identify which input feature dimension has a larger impact to the state. Those more influential to the state (\ie, a large distance in step 6) will be assigned a larger salience value.

In step 1, we jointly train two unimodal networks $\mathbf{f}_{\boldsymbol{\theta}_1^*}$ and $\mathbf{f}_{\boldsymbol{\theta}_2^*}$ that respectively take unimodal data $\mathbf{X}^a$ and $\mathbf{X}^b$ as input. The first loss term in Eq. (\ref{eq.train}) aims to align feature spaces learned by the two networks, and the remaining loss terms ensure that learned features are essential for a correct prediction. We believe that this training strategy aligns three sources of \emph{decisive features} at the output level. In step 2, we follow the idea of permutation feature importance~\citep{breiman2001random} to trace back \emph{modality-general decisive features} at the input level. For the $i$-th dimension in $\mathbf{X}^a$, we randomly permute $\mathbf{X}^a$ along this dimension and obtain a permuted $\mathbf{\tilde X}^a$ in step 5. Next, we calculate the distance between $\mathbf{f}_{\boldsymbol{\theta}_1^*}(\mathbf{\tilde X}^a)$ and $\mathbf{f}_{\boldsymbol{\theta}_2^*}(\mathbf{X}^b)$ in step 6. A large distance indicates that the $i$-th dimension largely influences the state of ``feature alignment''. Consequently, we are able to quantify the proportion of \emph{modality-general decisive features} in each input feature channel and use the salience vector $\mathbf{p}$ to represent it. We repeat the permutation process for $M$ times and average the distance value for good stability. Finally, $\mathbf{p}$ is normalized to $[0,1]^{d_1}$ in step 9.

Note that: (1) The focus of this paper is to propose and validate MFH. Algorithm \ref{alg.sep} presents an approach to rank features and allows us to derive teachers of different $\gamma$ to justify the MFH implication. Developing methods that can separate \emph{modality-general decisive features} and \emph{modality-specific decisive features} is a challenging problem worth deep investigation and is left as future work. (2) Algorithm \ref{alg.sep} is not limited to feature ranking at the input level. $\mathbf{X}^a$ and $\mathbf{X}^b$ can be features extracted from middle layers of the neural network as well. In such case, output $\mathbf{p}$ reflects the salience for each middle-layer feature channel. (3) Algorithm \ref{alg.sep} could be equally applied to rank \emph{modality-general decisive features} for modality $b$ as long as we permute $\mathbf{X}^b$.

% \textbf{Remarks.} First, Algorithm \ref{alg.sep} is not limited to feature ranking at the input level. $\mathbf{X}^a$ and $\mathbf{X}^b$ can be features extracted from middle layers of the neural network as well. In such case, output $\mathbf{p}$ reflects the salience of \gd \ for each middle-layer feature channel. Secondly, based on $\mathbf p$, we can decide whether each input dimension is ``general decisive'' or ``non general decisive''. For instance, we can set a hard threshold or pick a certain ratio according to sorted $\mathbf p$. This provides us with a way to control $\gamma$ for a given teacher network. In the experiments (Sec. \ref{sec.exp}), we nullify general decisive or non general decisive feature channels identified by  Algorithm \ref{alg.sep} to obtain teacher models with different $\gamma$ to justify the hypothesis. Last but not least, Algorithm \ref{alg.sep} could be equally applied to identify modality-general decisive features for modality $1$ as long as we permute $\mathbf{X}^a$.

Next, we present results by applying this feature ranking approach to two multimodal applications, \ie, RAVDESS emotion recognition and VGGSound event classification.

The Ryerson Audio-Visual Database of Emotional Speech and Song (RAVDESS)~\citep{ravdess} contains videos and audios of 24 professional actors vocalizing two lexically-matched statements. For modality $a$ (\ie, teacher modality), we adopt Kaiser best sampling and take mel-frequency cepstral coefficients (MFCCs) features from corresponding audio. For modality $b$ (\ie, student modality), we uniformly sample single-frame images every 0.5 second from each video. We randomly split image-audio pairs, and have 7,943 data for training, 2,364 data for validation and 1,001 data for testing. Similar to ~\citep{xue2021multimodal}, the teacher and student architecture are 3-layer CNNs followed by 3 fully-connected layers. We set $\rho$ in Eq. (\ref{eq.general_kd_loss}) in the main text as 0 (\ie, only use $\mathcal{L}_{kd}$ for distillation) to fully observe the teacher's influence on student performance. We report results with three feature nullifying ratio $r\%$ in Table \ref{tab.ravdess2}, which is a detailed version of Table {\ref{tab.combined}} in the main text. Results are averaged over 5 runs.

As shown in the table, with more feature channels getting nullified (\ie, increasing $r\%$), the random and modality-specific version both suffer from a heavy performance degradation. On the contrary, a modality-general teacher still attains satisfactory distillation performance and outperforms regular KD even when the feature nullifying ratio goes to 75\%. This demonstrates the efficacy of our proposed feature ranking method as well as the practical value of MFH.

\begin{minipage}[!t]{\textwidth}
  \begin{minipage}[c]{0.45\textwidth}
    \centering
    \captionof{table}{mean Average Precision (\%) on VGGSound event classification. Teacher modality is audio and student modality is video. }
    % \begin{small}
    \scalebox{0.8}{
    \begin{tabular}{lll}
\toprule
\makecell[l]{$\rho$} & 0  & 0.5 \\
\midrule
% Teacher & 29.86 & 29.86\\
% \cmidrule(lr){1-1}
 Regular & 30.62 & 30.70 \\
 Modality-specific & 24.98 (\dd) & 28.14 (\dd)\\
Random & 28.99 & 29.56\\
Modality-general & 31.88 (\uu) & 31.98 (\uu)\\
\bottomrule
\end{tabular}
}
\label{tab.vggsound2}
% \end{small}
\label{tab:vggsound}
    \end{minipage}
    \hfill
    \begin{minipage}[c]{0.5\textwidth}
    \centering
    \includegraphics[width=0.6\linewidth]{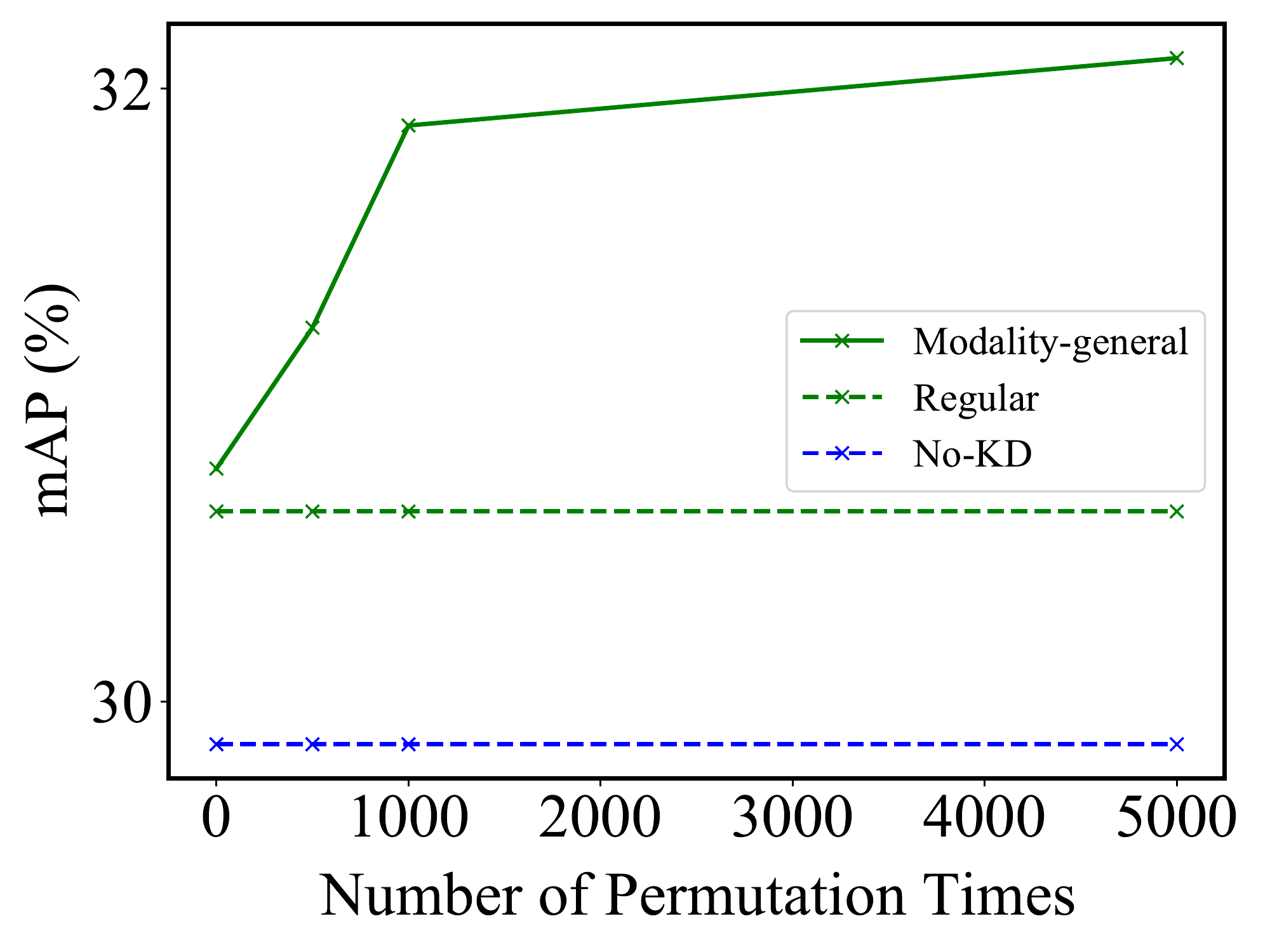}
    \captionof{figure}{Crossmodal KD peprformance with varying permutation number $M$.}
    \label{fig.abl_permu_vgg}
  \end{minipage}
  \end{minipage}

VGGSound is a large-scale audio-visual correspondent dataset. We randomly choose 100 class from its 310 classes and obtain 56,614 audio-video pairs for training and 4,501 audio-video pairs for testing. Videos clips and audio spectrograms are taken as input features, respectively. The audio network is implemented as a ResNet-18 / ResNet-50 backbone followed by linear layers and the video network is the same architecture with 2D convolution replaced by 3D convolution. For Table \ref{tab.combined} in the main paper, we set $\rho$ in Eq. (\ref{eq.general_kd_loss}) to be 0 and experiment with both ResNet-18 and ResNet-50. In Table \ref{tab.vggsound2}, we report results of both $\rho=0$ and $\rho=0.5$ with the ResNet-18 backbone. The conclusion is consistent: a modality-general teacher improves student performance while a modality-specific teacher results in performance degradation. These results help validate our proposed MFH.

% \begin{table}[]
%     \centering
%     \captionof{table}{Student mean Average Precision (\%) on VGGSound Event Classification. Teacher modality is audio and student modality is video. The student learning from a modality-general teacher demonstrates best performance. }
%     \label{tab:rho}
%     % \begin{small}
%     \begin{tabular}{c|cccc}
% \toprule
% \multirow{2}{*}{Teacher} & \multicolumn{4}{c}{Student} \\
% &Regular& Modality-specific &Random &Modality-general \\
% \midrule
% \multicolumn{5}{c}{$\rho=0$}\\
% \midrule
% 29.86&30.62&24.98 (\dd)& 28.99& 31.88 (\uu)\\
% \midrule
% \multicolumn{5}{c}{$\rho=0.5$}\\
% \midrule
% 29.86&30.70&28.14 (\dd)& 29.56& 31.98 (\uu)\\
% \bottomrule
% \end{tabular}
% \end{table}

% \begin{table}[]
%     \centering
%     \captionof{table}{Student mean Average Precision (\%) on VGGSound Event Classification. }
%     \label{tab:rho}
%     \begin{tabular}{c}
%     \includegraphics[scale=0.4]{./abl_permu_vgg.pdf}
%     \end{tabular}
% \end{table}

% \textbf{The Number of Permutation times}
% In the Algorithm ({\color{red}1}), the permutation repeat $M$ times for the stability of calculating the salience of the feature. As shown in Figure \ref{}, with the number of K increase, more permutations are applied, and we quantify the relative importance of each input feature channel more accurately. The student performance of distillation become better.

In Algorithm \ref{alg.sep}, we repeat permutation $M$ times for a better estimation of each feature dimension's salience value. Fig. \ref{fig.abl_permu_vgg} provides an analysis on the number of permutation times $M$. As $M$ increases, we have a more accurate estimation of $\mathbf{p}$, so modality-general KD gradually improves and finally reaches a plateau.

% VGGSound~\citep{chen2020vggsound} is a large-scale audio-video event classification dataset including over 200,000 video and 310 classes. We randomly choose 100 class including 56,614 audio-video pairs for training and 4,501 audio-video pairs for testing. We set modality 1 (\ie, student modality) as video and modality 2 (\ie, teacher modality) as audio. The student model is based on ResNet-18 with videos as input. To match 3D inputs, we simply replace all 2D convolution operations by 3D convolution operations of the ResNet-18~\citep{}. The teacher model is ResNet-18 with spectragram as input which we apply a short time Fourier transform on audio to generate. According to Algorithm \ref{alg.sep}, we need to permute the features of $\mathbb{X}_2$, if we simply regard the input audio or video as $X$, it faces slow permutation and distance calculation speed and a pixel may not be suitable to be regarded as decisive features or not. Therefore, we take the features from preliminary layers to permute. Then we got the salience vector $\mathbf{p} \in \mathcal{R}^{512}$. We evaluate the utility of modality-general decisive feature in knowledge distillation under 5 settings: 1) naive KD 2) KD with feature aligned teacher 3) randomly nullify features in $\mathbf{p}$ 4) nullify the modality-general deceive features in $\mathbf{p}$ 5) nullify the non modality-general deceive features in $\mathbf{p}$. Although take the same model and input data, teachers in settings 3, 4, 5 generate different knowledge.

\revision{
\section{Comparison of MVD with the multi-view assumption}\label{ap_sec.multiview}
Below we compare MVD with the multi-view assumption~\citep{sridharan2008information}. We adopt the same notations used in the main text here for the multi-view learning paradigm: assume the input variable is partitioned into two different views $\mathbf{x}^a$ and $\mathbf{x}^b$, and there is a target variable $y$ of interest. The multi-view assumption states that either view alone is sufficient to predict the target label $y$ accurately. As illustrated in Figure \ref{fig.multiview} (a), all task-relevant information is assumed to lie in the shared regions between $\mathbf{x}^a$ and $\mathbf{x}^b$.}

\revision{
Despite its wide use in unimodal self-supervised learning, the multi-view assumption is likely to fail when $\mathbf{x}^a$ and $\mathbf{x}^b$ are from different modalities~\citep{tsai2020self}. In fact, whether the multi-view assumption holds is largely dependent on downstream tasks and modalities involved. On one hand, for tasks such as image classification, either the image itself or its caption is sufficient to derive the target label. On the other hand, for tasks such as MM-IMDB movie genre classification, using the movie poster (image modality) alone may not convey which category this movie belongs to. Movie descriptions (text modality-specific information) are also needed to infer the target label $y$. Similarly, using the depth image alone is not sufficient for RGB-D semantic segmentation as there are regions in the image that have the same depth values but correspond to different semantic labels. The multi-view assumption thus does not hold for many challenging multimodal tasks.
}

% \begin{figure}[!t]
%     \centering
%     \includegraphics[width=0.7\linewidth]{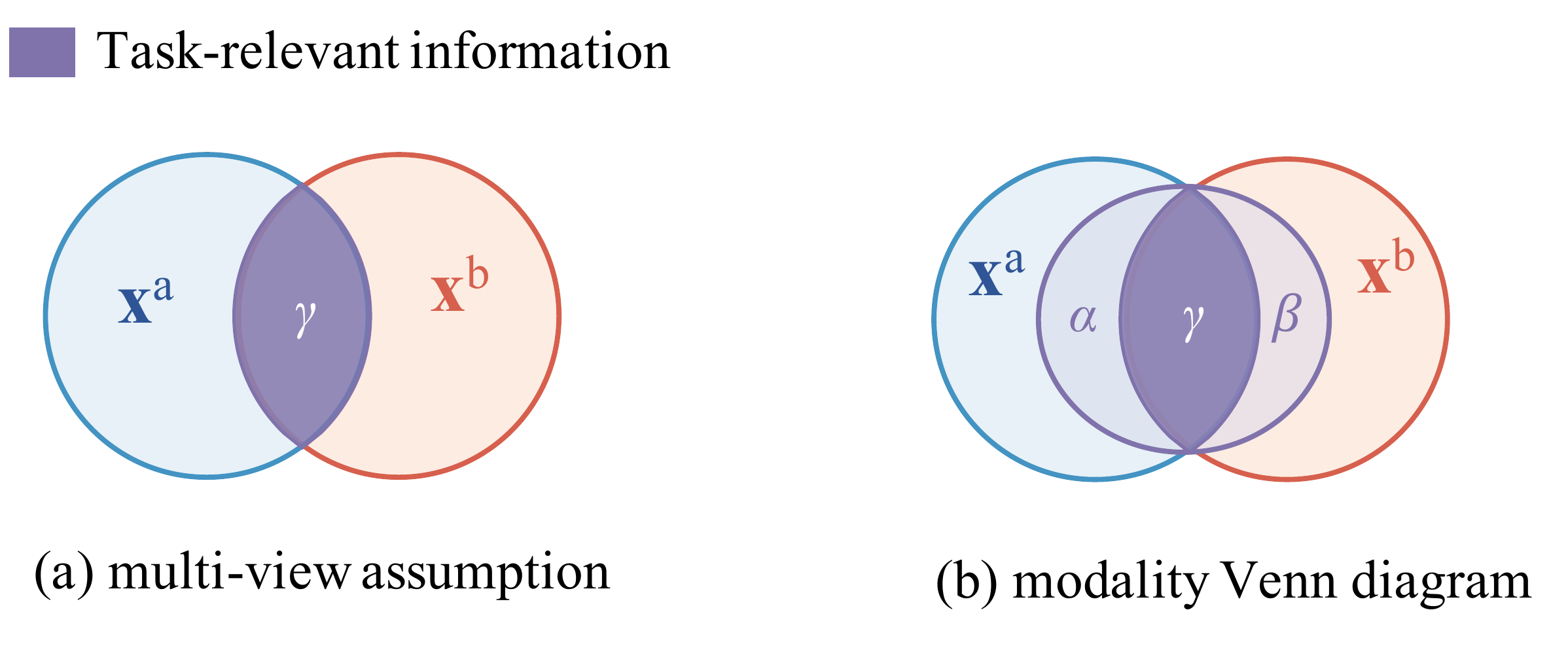}
%     \caption{Illustration of (a) the multi-view assumption and (b) our modality Venn diagram.}
%     \label{fig.multiview}
% \end{figure}
% \vspace{-2mm}
\begin{wrapfigure}{l}{0.6\textwidth}
\centering
\includegraphics[width=0.5\textwidth]{}
\caption{Illustration of (a) the multi-view assumption and (b) our modality Venn diagram.}
\label{fig.multiview}
\end{wrapfigure}

\revision{
Our proposed MVD generalizes the multi-view assumption by taking modality-specific decisive features into consideration as well. It states that task-relevant information is composed of three parts: (1) modality-general decisive features; (2) decisive features specific to modality $a$; and (3) decisive features specific to modality $b$, as illustrated in Figure \ref{fig.multiview} (b). Consequently, the multi-view assumption can be considered as a special case of MVD when $\alpha=\beta=0$ and $\gamma=1$.
}

\end{document}